\def\etal{{\it et al. }}
\theoremstyle{definition}
\newtheorem{theorem}{Theorem}
\newtheorem{corollary}{Corollary}[theorem]
\ifcvprfinal\pagestyle{empty}\fi
\begin{document}

\title{Mitigating Information Leakage in Image Representations: A Maximum Entropy Approach} % Replace with your title

\author{Proteek Chandan Roy and Vishnu Naresh Boddeti \\
Department of Computer Science and Engineering \\
Michigan State University, East Lansing MI 48824\\
{\tt \{royprote, vishnu\}@msu.edu}
}

\maketitle
\thispagestyle{empty}

% abstract.tex

\begin{abstract}

Image recognition systems have demonstrated tremendous progress over the past few decades thanks, in part, to our ability of learning compact and robust representations of images. As we witness the wide spread adoption of these systems, it is imperative to consider the problem of unintended leakage of information from an image representation, which might compromise the privacy of the data owner. This paper investigates the problem of learning an image representation that minimizes such leakage of user information. We formulate the problem as an adversarial non-zero sum game of finding a good embedding function with two competing goals: to retain as much task dependent discriminative image information as possible, while simultaneously minimizing the amount of information, as measured by entropy, about other sensitive attributes of the user. We analyze the stability and convergence dynamics of the proposed formulation using tools from non-linear systems theory and compare to that of the corresponding adversarial zero-sum game formulation that optimizes likelihood as a measure of information content. Numerical experiments on UCI, Extended Yale B, CIFAR-10 and CIFAR-100 datasets indicate that our proposed approach is able to learn image representations that exhibit high task performance while mitigating leakage of predefined sensitive information.
\end{abstract}

\section{Introduction}
Current day machine learning algorithms based on deep neural networks have demonstrated impressive progress across multiple domains such as image classification, speech recognition etc. By stacking together multiple layers of linear and non-linear operations deep neural networks have been able to learn and identify complex patterns in data. As a by-product of these capabilities, deep neural networks have also become powerful enough to inadvertently identify sensitive information or features of data even in the absence of any additional side information. For example, consider a scenario where a user enrolls their facial image in a face recognition system for the purpose of access control. During enrollment, a feature vector is extracted from the image and stored in a database. Apart from the identity of the user, this feature vector potentially contains information that is sensitive to the user, such as the age, information that the user may never have expressly consented to provide. More generally, learned data representations could leak auxiliary information that the participants may never have intended to release. Information obtained in this manner can be used to compromise the privacy of the user or to be biased and unfair to the user. Therefore, it is imperative to develop representation learning algorithms that can \emph{intentionally} and \emph{permanently} obscure sensitive information while retaining task dependent information. Addressing this problem is the central aim of this paper.

A few recent attempts have been made to study related problems, such as learning censored \cite{edwards2015censoring}, fair \cite{louizos2015variational}, or invariant \cite{xie2017controllable} representations of data. The central idea of these approaches, collectively referred to as \emph{Adversarial Representation Learning} (ARL), is to learn a representation of data in an adversarial setting. These approaches couple together (i) an adversarial network that seeks to classify and extract sensitive information from a given representation, and (ii) an embedding network that is tasked with extracting a compact representation of data while preventing the adversarial network from succeeding at leaking sensitive information. To achieve their respective goals, the adversary is optimized to maximize the likelihood of the sensitive information, while the encoder is optimized to minimize the same likelihood i.e., adversary's likelihood of the sensitive information, thereby leading to a zero-sum game. We will henceforth refer to this formulation as \emph{Maximum Likelihood Adversarial Representation Learning} (ML-ARL).

The zero-sum game formulation of optimizing the likelihood, however, is practically sub-optimal from the perspective of preventing information leakage. As an illustration consider a problem where the sensitive attribute has three categories. Let there be two instances where the adversary's probability distribution of the sensitive label is (0.33, 0.17, 0.5) and (0.33., 0.33., 0.33.) and let the correct label be class 1 for both of them. In each of these cases the likelihood of the discriminator is the same i.e., $\log 0.33$ but the former instance is more informative than the latter. Moreover, the potential of this formulation to prevent information leakage is predicated upon: (i) the existence of an equilibrium, and (ii) the ability of practical optimization procedures to converge to such an equilibrium. As we will show, in practice, the conditions necessary for convergence may not be satisfied. Therefore, when the optimization does not reach the equilibrium, a probability distribution with the minimum likelihood is the distribution that is most certain with the potential to leak the most amount of information. In contrast, the second instance is a uniform distribution over the sensitive labels and provides no information to the adversary. This solution corresponds to the maximum entropy distribution over the sensitive labels.

\vspace{5pt}
\noindent\textbf{Contributions:} Building on the observations above, we propose a framework, dubbed \emph{Maximum Entropy Adversarial Representation Learning} (MaxEnt-ARL), which optimizes an image representation with two major objectives, (i) maximally retain information pertinent to a given target attribute, and (ii) minimize information leakage about a given sensitive attribute. We pose the learning problem in an adversarial setting as a non-zero sum three player game between an encoder, a predictor and a discriminator (proxy adversary) where the encoder tries to maximize the entropy of the discriminator on the sensitive attribute and maximizes the likelihood of the predictor on the target attribute.

We analyze the equilibrium and convergence properties of the ML-ARL as well as the proposed MaxEnt-ARL formulation using tools from non-linear systems theory. We compare and evaluate the numerical performance of ML-ARL and MaxEnt-ARL for fair classification tasks on the UCI dataset, illumination invariant classification on the Extended Yale B dataset and two fabricated tasks on the CIFAR-10 and CIFAR-100 datasets. On a majority of these tasks MaxEnt-ARL outperforms all other baselines.

\section{Related Work}
\vspace{5pt}
\noindent \textbf{Adversarial Representation Learning:} In the context of image classification, adversarial learning has been utilized to learn representations that are invariant across domains \cite{ganin2015unsupervised, ganin2016domain, tzeng2017adversarial}, thereby enabling us to train classifiers on a source domain and utilize on a target domain.

The entire body of work devoted to learning fair and unbiased representations of data share many similarities to the adversarial representation learning problem. Early work on this topic did not involve an explicit adversary but shared the goal of learning representations with competing objectives. The concept of learning fair representations was first introduced by Zemel et al~\cite{zemel2013learning}, where the goal was to learn a representation of data by ``fair clustering'' while maintaining the discriminative features of the prediction task. Building upon this work many approaches have been proposed to learn an unbiased representation of data while retaining its effectiveness for a prediction task. To remove influence of ``nuisance variables" Louizos et al \cite{louizos2015variational} proposed variational fair autoencoder (VFAE), a joint optimization framework for learning an invariant representation and a prediction task. In order to improve fairness in the representation, they regularized the marginal distribution $p(z|s)$ through Maximum Mean Discrepancy (MMD).

More recent approaches \cite{edwards2015censoring,zhang2018mitigating,beutel2017data,xie2017controllable} have used explicit adversarial networks to measure information content of sensitive attributes. These problems are set up as a minimax game between the encoder and the adversary. The encoder is setup to achieve fairness by maximizing the loss of the adversary i.e. minimizing negative log-likelihood of sensitive variables as measured by the adversary. Among these approaches, our proposed MaxEnt-ARL formulation is most directly related to the Adversarial Invariant Feature Learning introduced by Xie et al. \cite{xie2017controllable}.

\vspace{5pt}
\noindent\textbf{Optimization Theory for Adversarial Learning:} The formulation of adversarial representation learning poses unique challenges from an optimization perspective. The parameters of the models in ARL are typically optimized through stochastic gradient descent, either jointly \cite{edwards2015censoring, mescheder2017numerics} or alternatively \cite{ganin2015unsupervised}. The former is, however, more commonly used in practice and is a generalization of gradient descent. While the convergence properties of gradient descent and its variants are well understood, there is relatively little work on the convergence and stability of simultaneous gradient descent in adversarial minimax problems. Recently, Mescheder \etal~\cite{mescheder2017numerics} and Nagarajan \etal~\cite{nagarajan2017gradient} both leveraged tools from non-linear systems theory \cite{khalil1996nonlinear} to analyze the convergence properties of simultaneous gradient descent in the context of GANs. They show that without the introduction of additional regularization terms to the objective of the zero-sum game, simultaneous gradient descent does not converge. Our convergence analysis of ML-ARL and MaxEnt-ARL also leverages the same non-linear systems theory tools and show the conditions under which they converge.

\begin{figure*}[!ht]
    \centering
    \includegraphics[width=0.6\textwidth]{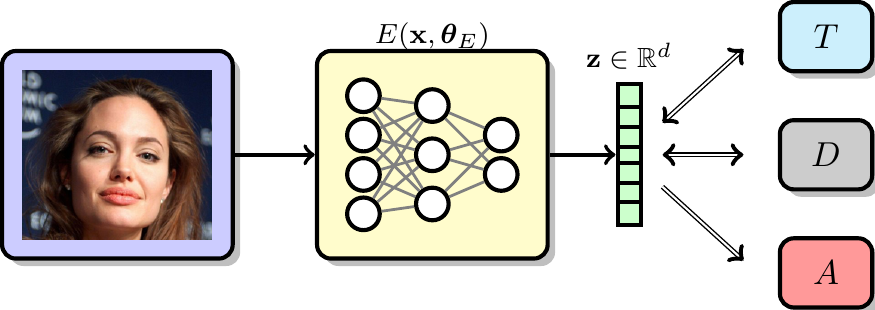}
    \caption{\textbf{Adversarial Representation Learning:} We consider the problem of learning an embedding function $E(\cdot, \bm{\theta}_E)$ that maps a high-dimensional image to a low-dimensional representation $\mathbf{z}\in\mathbb{R}^d$ while satisfying two competing goals: retain as much image information necessary to accurately predict a target attribute $\mathcal{Y}=\{y_1,\dots,y_n\}$ while simultaneously minimizing information leakage about a sensitive attribute $\mathcal{S}=\{s_1,\dots,s_m\}$ by an unknown adversary $A(\cdot, \bm{\theta}_A)$. The learning problem is formulated as a minimax game between $\{E(\cdot, \bm{\theta}_E), T(\cdot,\bm{\theta}_T)\}$ and a proxy adversary $D(\cdot,\bm{\theta}_D)$. \label{fig:overview}}
\end{figure*}

\section{Adversarial Representation Learning}

The Adversarial Representation Learning setup involves observational input $\bm{x}$, a target attribute with $n$ classes $\mathcal{Y}=\{y_1, \dots,\allowbreak y_n\}$ and a sensitive attribute with $m$ classes $\mathcal{S}=\{s_1,\ldots, s_m\}$. In this paper, we restrict ourselves to attributes over a discrete space with multiple labels. Our goal is to learn an embedding function that maps $\bm{x}$ to $\bm{z}$ from which we can predict a target attribute $\mathcal{Y}$, while also minimizing information leakage about a known sensitive attribute $\mathcal{S}$ i.e. class labels of attribute $\mathcal{S}$.

\subsection{Problem Setting}
The Adversarial Representation Learning problem is formulated as a game among three players, \emph{encoder} $E$, a \emph{target predictor} $T$, and a \emph{discriminator} $D$ that serves as a proxy for an unknown \emph{adversary} $A$. After $E$ is learned and fixed, we train and evaluate an \emph{adversary} $A$ with the aim of leaking information of the sensitive attribute that we sought to protect. Since the adversary $A$ is unknown to encoder at training, the encoder $E$ is trained against the discriminator $D$, which thereby acts as a proxy for the unknown $A$. An illustration of this setting is shown in Fig. \ref{fig:overview}. The \emph{encoder} is modeled as a deterministic function, $\bm{z}=E(\bm{x};\bm{\theta}_{E})$, the \emph{target predictor} models the conditional distribution $p(t|\bm{x})$ via $q_T(t|\bm{z}; \bm{\theta}_T)$ and the \emph{discriminator} models the conditional distribution $p(s|\bm{x})$ via $q_D(s|\bm{z};\bm{\theta}_D)$, where $p(t|\bm{x})$ and $p(s|\bm{x})$ are the ground truth labels for a given target and sensitive labels $t$ and $s$, respectively.

\subsection{Background}
In existing formulations of ARL, the goal of the encoder is to maximize the likelihood of the target attribute, as measured by the \emph{target predictor} $T$, while minimizing the likelihood of the sensitive attribute, as measured by the \emph{discriminator} $D$. This problem (henceforth referred to as ML-ARL) was formally defined by Xie \etal \cite{xie2017controllable} as a three player  zero-sum minimax game:
\begin{equation}
    \min_{\bm{\theta}_E,\bm{\theta}_T}\max_{\bm{\theta}_D} J_1(\bm{\theta}_E,\bm{\theta}_T) -\alpha J_2(\bm{\theta}_E,\bm{\theta}_D)
    \label{eq:likelihood}
\end{equation}
\noindent where $\alpha$ is a parameter that allows us to trade-off between the two competing objectives for the encoder and, 
\begin{align}
    J_1(\bm{\theta}_E,\bm{\theta}_T) &= KL\left(p\left(t|\bm{x}\right)\|q_T\left(t|E(\bm{x};\bm{\theta}_E);\bm{\theta}_T\right)\right) \nonumber \\
    J_2(\bm{\theta}_E,\bm{\theta}_D) &= KL\left(p\left(s|\bm{x}\right)\|q_D\left(s|E(\bm{x};\bm{\theta}_E);\bm{\theta}_D\right)\right) \nonumber
\end{align}
\noindent where the $KL(\cdot\|\cdot)$ terms reduce to the log-likelihood if the label distributions are ideal categorical distributions.

\subsection{Maximum Entropy Adversarial Representation Learning}
In the MaxEnt-ARL formulation the goal of the encoder is to maximize the likelihood of the target attribute, as measured by the \emph{target predictor}, while maximizing the uncertainty in the sensitive attribute, as measured by the entropy of the \emph{discriminator's} prediction. Formally, we define the MaxEnt-ARL optimization problem as a three player  non-zero sum game:
\begin{equation}
    \label{eq:maxent}
    \begin{aligned}
    \min_{\bm{\theta}_D} & \mbox{ }  V_1(\bm{\theta}_E,\bm{\theta}_D) \\ 
    \min_{\bm{\theta}_E,\bm{\theta}_T} & \mbox{ } V_2(\bm{\theta}_E,\bm{\theta}_T) +\alpha V_3(\bm{\theta}_E,\bm{\theta}_D) 
    \end{aligned}
\end{equation}
\noindent where $\alpha$ allows us to trade-off between the two competing objectives for the encoder and,
\begin{align}
    V_1(\bm{\theta}_E,\bm{\theta}_D) &= KL\left(p\left(s|\bm{x}\right)\|q_D\left(s|E(\bm{x};\bm{\theta}_E);\bm{\theta}_D\right)\right) \nonumber \\
    V_2(\bm{\theta}_E,\bm{\theta}_T) &= KL\left(p\left(t|\bm{x}\right)\|q_T\left(t|E(\bm{x};\bm{\theta}_E);\bm{\theta}_T\right)\right) \nonumber \\
    V_3(\bm{\theta}_E,\bm{\theta}_D) &= KL\left(q_D\left(s|E(\bm{x};\bm{\theta}_E\right);\bm{\theta}_D)\|U\right) \nonumber
\end{align}
\noindent where $U$ is the uniform distribution. The crucial difference between the MaxEnt-ARL formulation and the ML-ARL formulation is the fact that while the encoder and the discriminator have competing objectives, in ML-ARL they directly compete against each other on the same metric (likelihood of sensitive attribute), while in MaxEnt-ARL they are optimizing competing metrics that are related but not the exact same metric.

Optimizing the embedding function to maximize the entropy of the discriminator instead of minimizing its likelihood has one crucial practical advantage. Entropy maximization inherently does not need class labels for training. This is advantageous in settings where it is either, (i) Undesirable for the embedding function to have access to the sensitive label, potentially for privacy reasons., or (ii) Sensitive labels for the data points are unknown. For instance consider, a semi-supervised scenario where only the desired label is known while the sensitive label is unknown. The embedding function can learn from such data by obtaining gradients from the entropy of the discriminator.

\section{Theoretical Analysis \label{sec:theory}}

In this section we analyze the properties of the MaxEnt-ARL formulation and compare it to the ML-ARL formulation, both  in terms of equilibrium as well as convergence dynamics under simultaneous gradient descent.

\subsection{Equilibrium}
\begin{theorem}\label{th1}
Given a fixed encoder $E$, the optimal discriminator is $q_D(s|E(\bm{x};\bm{\theta}_E);\bm{\theta}_D^{*})=p(s|E(\bm{x};\bm{\theta}_E))$ and the optimal predictor is $q_T(t|E(\bm{x};\bm{\theta}_E);\bm{\theta}_T^{*})=p(t|E(\bm{x};\bm{\theta}_E))$.
\end{theorem}
\begin{proof} The proof uses the fact that, given a fixed encoder $E$, the objective is convex w.r.t. each distribution. Thus we can obtain the stationary point for $q_D(s|E(\bm{x};\bm{\theta}_E);\bm{\theta}_D)$ and $q_T(s|E(\bm{x};\bm{\theta}_E);\bm{\theta}_T)$ as a function of $p(s|E(\bm{x};\bm{\theta}_E))$ and $p(t|E(\bm{x};\bm{\theta}_E))$, respectively. The detailed proof is included in the supplementary material.
\end{proof} 

Therefore, both the optimal distributions $q_D(s|E(\bm{x};\bm{\theta}_E);\bm{\theta}_D^{*})$ and $q_T(t|E(\bm{x};\bm{\theta}_E);\bm{\theta}_T^{*})$ are functions of the encoder parameters $\bm{\theta}_E$. The objective for optimizing the encoder now reduces to:
\begin{equation}
\label{eq:obj_func_entropy}
\begin{aligned}
& \min_{\bm{\theta}_E} \mbox{ } \mathbb{E}_{\bm{x},t}\left[-\log q_T(t|E(\bm{x};\bm{\theta}_E);\bm{\theta}^*_T)\right] + \log m \\ 
&+\alpha\mathbb{E}_{\bm{x}}\left[\sum_{i=1}^m q_D(s_i|E(\bm{x};\bm{\theta}_E);\bm{\theta}^*_D)\log q_D(s_i|E(\bm{x};\bm{\theta}_E);\bm{\theta}^*_D)\right] \nonumber
\end{aligned}
\end{equation}
\noindent where the first term is minimizing the uncertainty (negative log-likelihood) of the true target attribute label and the second term is maximizing unpredictability (as measured by entropy) across all the classes in the discriminator distribution, thereby, preventing leakage of any information about the sensitive attribute label. In contrast the corresponding objective of the ML-ARL problem is \cite{xie2017controllable},
\begin{equation}
\label{eq:obj_func_minlike}
\begin{aligned}
\min_{\bm{\theta}_E} \mbox{ } & \mathbb{E}_{\bm{x},t}\left[-\log q_T(t|E(\bm{x};\bm{\theta}_E);\bm{\theta}^*_T)\right] \\ 
&+\alpha\mathbb{E}_{\bm{x},s}\left[\log q_D(s|E(\bm{x};\bm{\theta}_E);\bm{\theta}^*_D)\right] \nonumber
\end{aligned}
\end{equation}
\noindent where the first term is minimizing the uncertainty (negative log-likelihood) of the true target attribute label, while the second term is maximizing uncertainty (log-likelihood) of only the true sensitive attribute label. However, by doing so, the encoder inadvertently becomes more certain about the other labels, and can still be informative to an adversary.

\vspace{5pt}
\noindent\textbf{Equilibrium when $s \perp \!\!\! \perp t$: } When the target and sensitive attributes are independent with respect to each other (e.g., age and gender), the two terms in the encoder optimization can both reach their optima simultaneously. Furthermore, the problem reduces to a non-zero sum two player game between the \emph{encoder} and the \emph{discriminator} in the MaxEnt-ARL case and to a zero-sum two player game between the same players in the case of ML-ARL.
\begin{corollary}\label{th2} 
When $s \perp \!\!\! \perp t$, let the optimum discriminator and predictor for an encoder $E$ be $q_D(s|E(\bm{x};\bm{\theta}_E);\bm{\theta}_D^{*})$ and $q_T(t|E(\bm{x};\bm{\theta}_E);\bm{\theta}_T^{*})$ respectively. The optimal encoder $E(\cdot)$ in the MaxEnt-ARL formulation induces a uniform distribution in the discriminator $q_D(s|E(\bm{x};\bm{\theta}_E^{*});\bm{\theta}_D^{*})$ over the classes of the sensitive attribute.
\end{corollary}
\begin{proof} The proof uses the fact that, given a fixed optimal discriminator $D$, $q_T(t|E(\bm{x};\bm{\theta}_E);\bm{\theta}_T^{*})$ is independent of $q_D(s|E(\bm{x};\bm{\theta}_E);\bm{\theta}_D^{*})$ when $s \perp \!\!\! \perp t$. The detailed proof is included in the supplementary material.
\end{proof} 

\vspace{5pt}
\noindent\textbf{Equilibrium when $s \not\perp \!\!\! \perp t$: } When the target and sensitive attributes are related to each other (e.g., beard and gender), the two terms in the encoder optimization cannot reach their optima simultaneously. In both the formulations, ML-ARL and MaxEnt-ARL, the relative optimality of the two objectives depends on the trade-off factor $\alpha$.
\subsection{Convergence Dynamics}
We analyze the standard algorithm (simultaneous stochastic gradient descent) for finding the equilibrium solution of such adversarial games. That is, we take simultaneous gradient steps in $\bm{\theta}_E$, $\bm{\theta}_D$ and $\bm{\theta}_T$, which can be expressed as differential equations of the form:
\begin{equation}
\label{eq:vector-field}
\begin{aligned}
    \dot{\bm{\theta}_D} &= f_D(\bm{\theta}) = \nabla_{\bm{\theta}_D} V_1(\bm{\theta}_E,\bm{\theta}_D) \\
    \dot{\bm{\theta}_T} &= f_T(\bm{\theta}) = \nabla_{\bm{\theta}_T} V_2(\bm{\theta}_E,\bm{\theta}_T)\\
    \dot{\bm{\theta}_E} &= f_E(\bm{\theta}) = \nabla_{\bm{\theta}_E} V_2(\bm{\theta}_E,\bm{\theta}_T) + \alpha V_3(\bm{\theta}_E,\bm{\theta}_T)
\end{aligned}
\end{equation}
\noindent where the gradients $f(\bm{\theta}) = (f_D(\bm{\theta}), f_T(\bm{\theta}), f_E(\bm{\theta}))$ define a \emph{vector field} over $\bm{\theta}=\left(\bm{\theta}_D, \bm{\theta}_T, \bm{\theta}_E\right)$.

The qualitative behavior of the aforementioned non-linear system near any equilibrium point can be determined via \emph{linearization} with respect to that point \cite{khalil1996nonlinear}. Restricting our attention to a sufficiently small neighborhood of the equilibrium point, the non-linear state equations in (\ref{eq:vector-field}) can be approximated by a linear state equation:
\begin{equation}
\dot{\bm{\theta}} = \bm{J}\bm{\theta}
\label{eq:jacobian}
\end{equation}
\noindent where,
$\left.\bm{J}=\begin{bmatrix} \frac{\partial f_D(\bm{\theta})}{\partial \bm{\theta}_D} & \frac{\partial f_D(\bm{\theta})}{\partial \bm{\theta}_T} & \frac{\partial f_D(\bm{\theta})}{\partial \bm{\theta}_E} \\
\frac{\partial f_T(\bm{\theta})}{\partial \bm{\theta}_D} & \frac{\partial f_T(\bm{\theta})}{\partial \bm{\theta}_T} & \frac{\partial f_T(\bm{\theta})}{\partial \bm{\theta}_E} \\
\frac{\partial f_E(\bm{\theta})}{\partial \bm{\theta}_D} & \frac{\partial f_E(\bm{\theta})}{\partial \bm{\theta}_T} & \frac{\partial f_E(\bm{\theta})}{\partial \bm{\theta}_E}
\end{bmatrix}\right\rvert_{\bm{\theta}=\bm{\theta}^*}$
is the Jacobian of the vector field evaluated at the chosen equilibrium point $\bm{\theta}^*=(\bm{\theta}^*_D,\bm{\theta}^*_T,\bm{\theta}^*_E)$. For small neighborhoods around an equilibrium, the trajectories of the non-linear system in (\ref{eq:vector-field}) is expected to be ``close" to the trajectories of the linear approximate system in (\ref{eq:jacobian}).
\begin{theorem}[Linearization]
    Let $\bm{x}=\bm{0}$ be an equilibrium point for the non-linear system, $\dot{\bm{x}}=f(\bm{x)}$, where 
    $f:\mathcal{D} \rightarrow \mathbb{R}^n$ is continuously differentiable and $\mathcal{D}$ is a neighborhood of the origin. Let, $\left.\bm{J}=\frac{\partial f}{\partial \bm{x}}\right\rvert_{\bm{x}=\bm{0}}$. Then,
    \begin{itemize}
        \item The origin is asymptotically stable if Re$(\lambda_i) < 0$ for all eigenvalues of $\bm{J}$.
        \item The origin is unstable if Re$(\lambda_i)\geq 0$ for one or more of the eigenvalues of $\bm{J}$.
    \end{itemize}
\end{theorem}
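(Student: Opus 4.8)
The plan is to prove both claims by Lyapunov's indirect method, converting the eigenvalue conditions on $\bm{J}$ into the existence of a quadratic (Chetaev-type) function whose sign and whose derivative along trajectories are controlled in a neighborhood of the origin. Throughout I would write $f(\bm{x}) = \bm{J}\bm{x} + g(\bm{x})$, where $g(\bm{x}) = f(\bm{x}) - \bm{J}\bm{x}$ collects the higher-order terms. Because $f$ is continuously differentiable and $\bm{J}$ is its Jacobian at the origin, $g(\bm{0})=\bm{0}$ and $\partial g/\partial \bm{x}|_{\bm{x}=\bm{0}}=\bm{0}$; hence for every $\gamma>0$ there is a ball $\|\bm{x}\|<r$ on which $\|g(\bm{x})\|\le \gamma\|\bm{x}\|$. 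This remainder bound is the bridge that lets the linear term dominate near the origin.

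For the asymptotic-stability claim, the first step is to observe that $\mathrm{Re}(\lambda_i)<0$ for all $i$ means $\bm{J}$ is Hurwitz. By the standard Lyapunov-equation lemma for linear systems, for any symmetric positive-definite $\bm{Q}$ there is a unique symmetric positive-definite $\bm{P}$ solving $\bm{P}\bm{J}+\bm{J}^{\top}\bm{P}=-\bm{Q}$. I would then take $V(\bm{x})=\bm{x}^{\top}\bm{P}\bm{x}$ as a Lyapunov candidate and compute its derivative along trajectories of $\dot{\bm{x}}=f(\bm{x})$, obtaining $\dot V = -\bm{x}^{\top}\bm{Q}\bm{x}+2\bm{x}^{\top}\bm{P}g(\bm{x})$. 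The first term is bounded above by $-\lambda_{\min}(\bm{Q})\|\bm{x}\|^2$ and the cross term by $2\|\bm{P}\|\,\|\bm{x}\|\,\|g(\bm{x})\|\le 2\gamma\|\bm{P}\|\,\|\bm{x}\|^2$. Choosing $\gamma<\lambda_{\min}(\bm{Q})/(2\|\bm{P}\|)$ and shrinking $r$ accordingly makes $\dot V$ negative definite on the deleted ball, so Lyapunov's direct theorem yields asymptotic stability.

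For the instability claim (strict case, $\mathrm{Re}(\lambda_i)>0$ for some $i$), I would use the companion construction. After a real coordinate change putting $\bm{J}$ in block form that separates eigenvalues with positive real part from the rest, one solves a Lyapunov equation with reversed sign, $\bm{P}\bm{J}+\bm{J}^{\top}\bm{P}=\bm{Q}\succ0$, whose solution $\bm{P}$ is indefinite, inheriting a positive eigenvalue from the unstable block and a negative one from the stable block. Taking $V(\bm{x})=\bm{x}^{\top}\bm{P}\bm{x}$ gives a function that is positive on an open cone reaching arbitrarily close to the origin while satisfying $\dot V=\bm{x}^{\top}\bm{Q}\bm{x}+2\bm{x}^{\top}\bm{P}g(\bm{x})>0$ there, once the same remainder bound is invoked to absorb the cross term. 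Chetaev's instability theorem then certifies that the origin is unstable.

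The main obstacle is the instability direction, and specifically the boundary behaviour. The quadratic-form/Chetaev argument above works cleanly only when the unstable eigenvalues have strictly positive real part; when $\mathrm{Re}(\lambda_i)=0$ the linear part carries a center subspace, the remainder $g$ can no longer be dominated in that direction, and the sign of $\dot V$ is undetermined by $\bm{J}$ alone. Resolving such marginal cases genuinely requires center-manifold analysis rather than linearization, so the clean criterion is really the strict version $\mathrm{Re}(\lambda_i)>0$; I would either restrict to that hypothesis or invoke a center-manifold reduction for the imaginary-axis eigenvalues, explicitly flagging that the purely spectral test is inconclusive on the imaginary axis.
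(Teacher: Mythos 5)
Your proof is the standard argument for Lyapunov's indirect method, and it matches the source the paper defers to: the paper gives no proof of its own, citing Theorem 4.7 of Khalil's \emph{Nonlinear Systems}, whose proof is exactly your construction (Hurwitz $\Rightarrow$ Lyapunov equation $\bm{P}\bm{J}+\bm{J}^{\top}\bm{P}=-\bm{Q}$ with $V=\bm{x}^{\top}\bm{P}\bm{x}$ for stability, and an indefinite solution of the sign-reversed Lyapunov equation plus Chetaev's theorem for instability, with the $C^1$ remainder bound $\|g(\bm{x})\|\le\gamma\|\bm{x}\|$ absorbing the nonlinearity in both directions). More importantly, you have correctly caught a defect in the statement as printed: with the condition $\mathrm{Re}(\lambda_i)\geq 0$ the second bullet is false --- for example $\dot{x}=-x^{3}$ has $\bm{J}=0$, so $\mathrm{Re}(\lambda)=0\geq 0$, yet the origin is asymptotically stable. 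Khalil's Theorem 4.7 requires the strict inequality $\mathrm{Re}(\lambda_i)>0$ for instability, and your diagnosis of why the boundary case fails (the remainder cannot be dominated on a center subspace, so linearization is inconclusive there) is exactly right. Your decision to restrict to the strict hypothesis, or else flag the marginal case as requiring center-manifold analysis, is the correct resolution; the paper should have stated the theorem with $>$ rather than $\geq$.
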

\begin{proof}
    See Theorem 4.7 of \cite{khalil1996nonlinear}.
\end{proof}

\section{Numerical Experiments \label{sec:experiments}}
In this section we will evaluate the efficacy of the proposed \emph{Maximum Entropy Adversarial Representation Learning} model and compare it with other \emph{Adversarial Representation Learning} baselines.
\subsection{Three Player Game: Linear Case}
\begin{figure}
    \centering
    \includegraphics[width=0.5\textwidth]{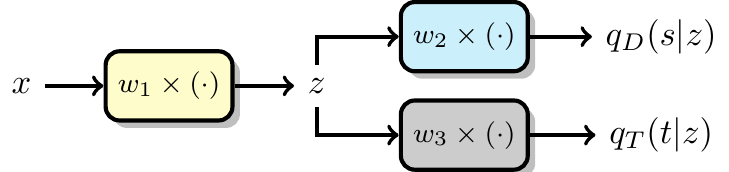}
    \caption{Three Player Game: Linear Example \label{fig:linear-three-player-game}}
\end{figure}
As an illustrative example we analyze the convergence of both ML-ARL and MaxEnt-ARL under the same setting. The encoder, discriminator and predictor are linear models with multiplicative weights $w_1,w_2$ and $w_3$, respectively. We limit our model to this three variable setting for ease of analysis and visualization. Both predictor and the discriminator are optimizing cross-entropy loss on binary $\{0,1\}$ labels. To observe the game between the three players we provide same data sample $x=1$ yet with different target and sensitive labels i.e., 4 samples with $\{00, 01, 10, 11\}$ for target and sensitive labels. Loss is calculated as the average over all samples and corresponding vector field values are also computed. The stationary point of this game, for both ML-ARL and MaxEnt-ARL, is at  $(w_1=0,w_2=0,w_3=0)$ and the gradient of the loss functions are zero at this point. We consider a small ($30 \times 30 \times 30$ grid) neighborhood around the stationary point in the range $[-0.01, 0.01]$ for weights $w_1,w_2,w_3$ and visualize trajectories by following the vector field of the game.

Figure~\ref{fig:trajectory_1d_gaussian} shows streamline plots of the vector field around $(0,0,0)$ for a point starting at the green location. In the ML-ARL case, we observe that when the predictor is fixed at $w_3=0$, the trajectory for the encoder and the discriminator does not converge and rotates around the stationary point. In contrast, for the MaxEnt-ARL method converges to the stationary point. When $w_1=0$, the streamlines for both ML-ARL and MaxEnt-ARL converge to $(0,0)$. For an alternate formulation, where the discriminator is of the form $D=z^2+b_2$, we found convergent behavior for both ML-ARL and MaxEnt-ARL.

\begin{figure*}[!ht]
    \centering
    \begin{subfigure}[t]{0.24\textwidth}
        \includegraphics[width=\textwidth]{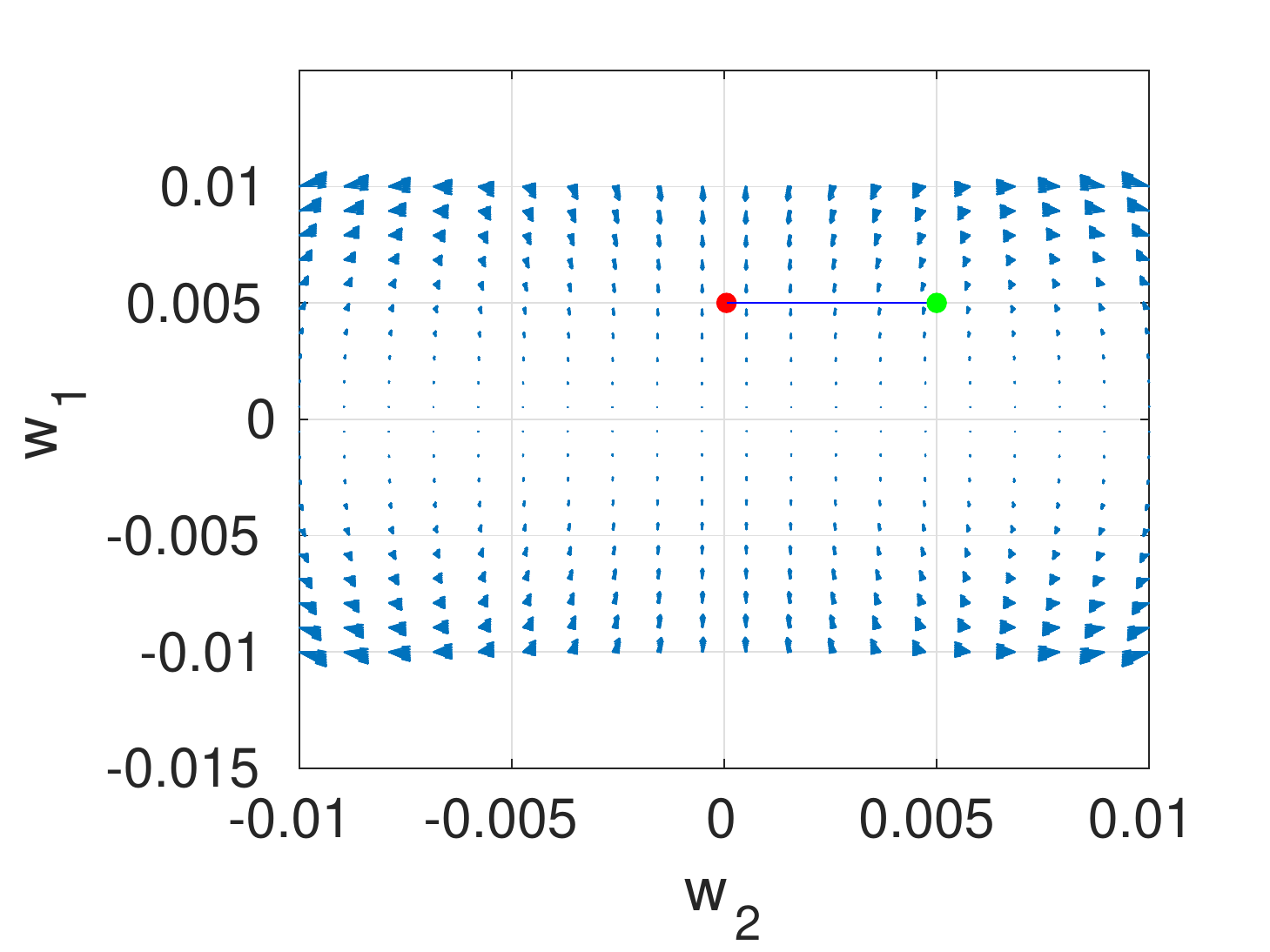}
        \caption{ML-ARL: Trajectory}
    \end{subfigure}
    \begin{subfigure}[t]{0.24\textwidth}
        \includegraphics[width=\textwidth]{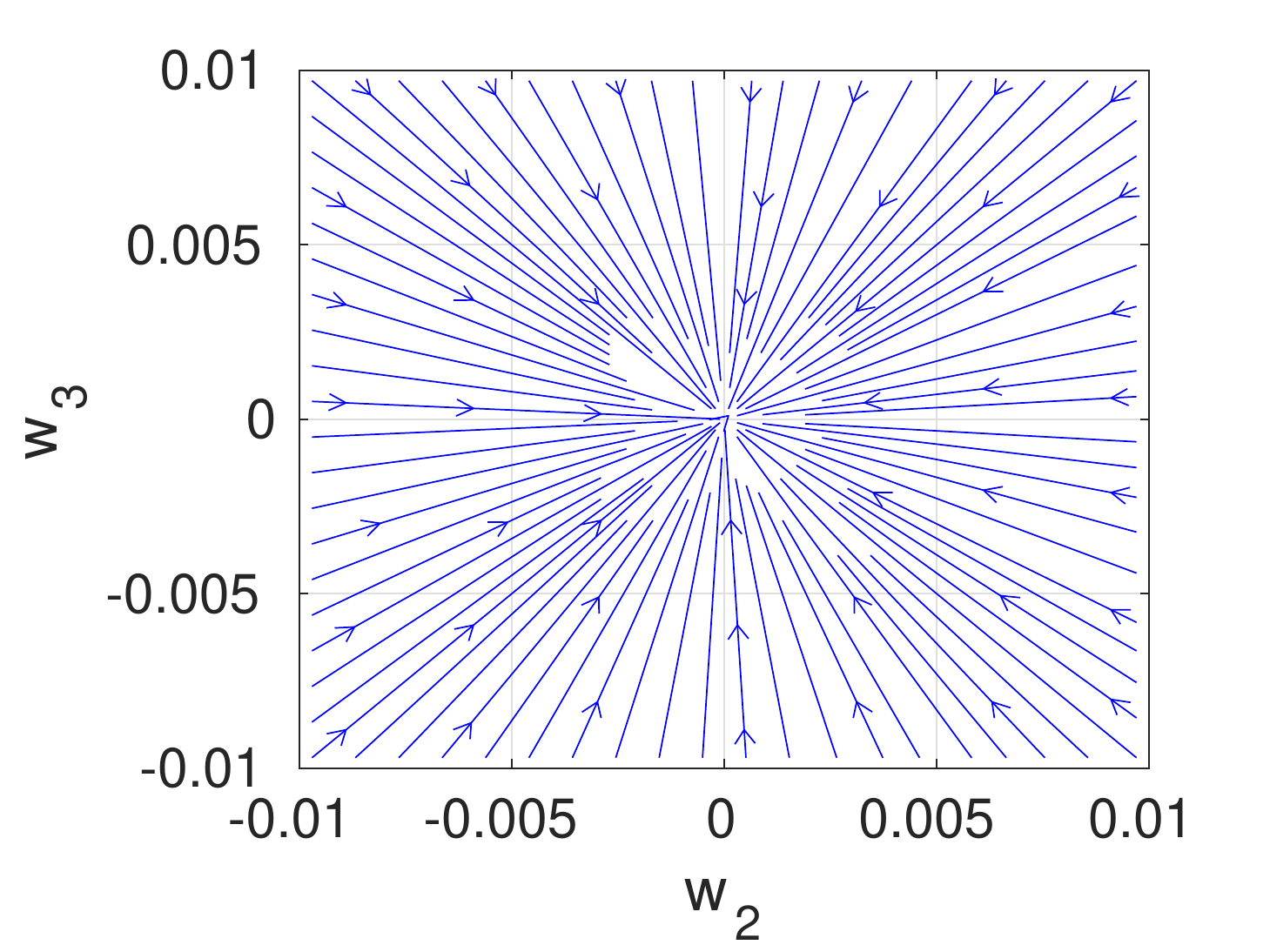}
        \caption{ML-ARL: $w_1=0$}
    \end{subfigure}
     \begin{subfigure}[t]{0.24\textwidth}
        \includegraphics[width=\textwidth]{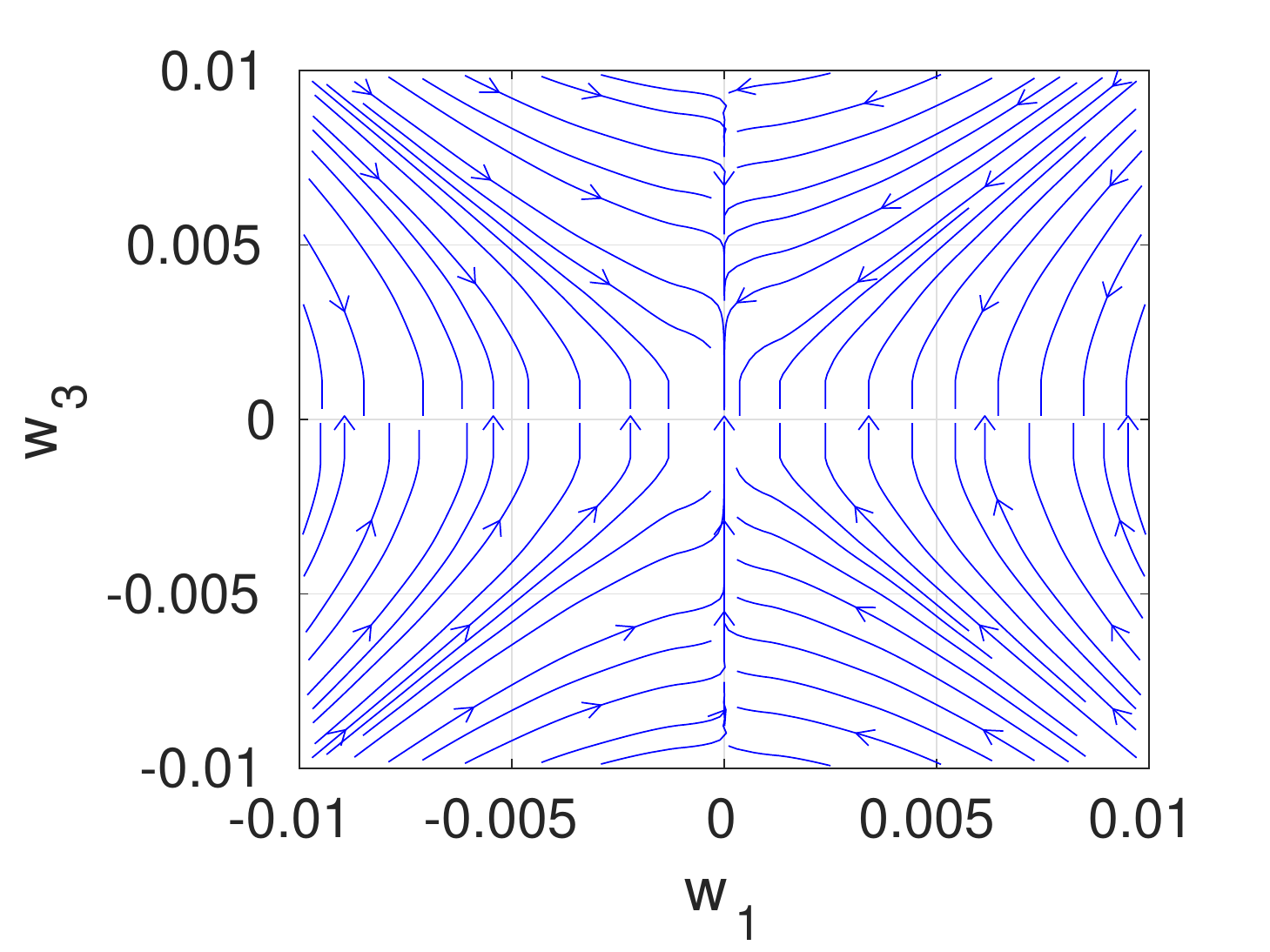}
        \caption{ML-ARL: $w_2=0$}
    \end{subfigure}
    \begin{subfigure}[t]{0.24\textwidth}
        \includegraphics[width=\textwidth]{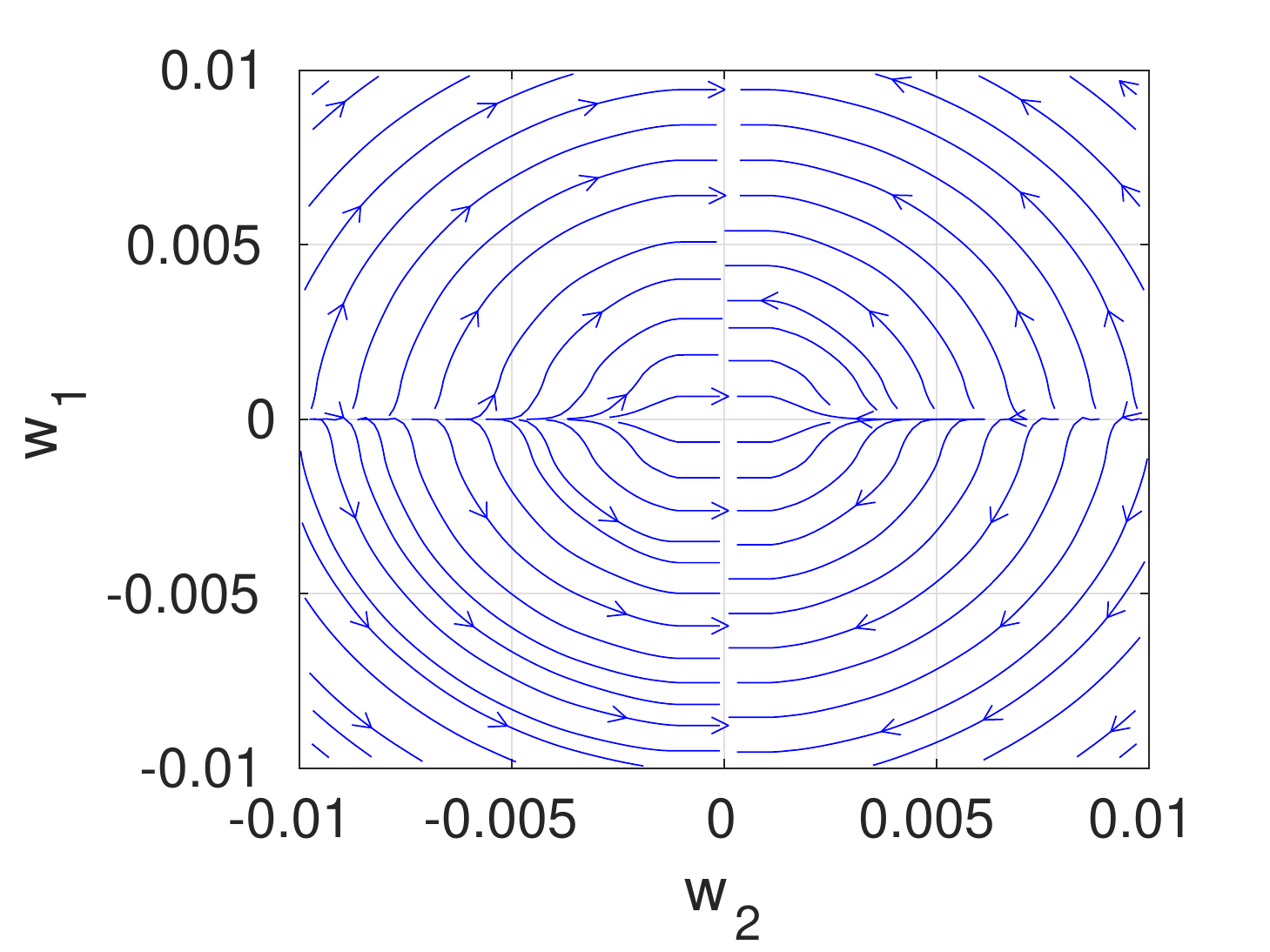}
        \caption{ML-ARL: $w_3=0$}
    \end{subfigure}
    \begin{subfigure}[t]{0.24\textwidth}
        \includegraphics[width=\textwidth]{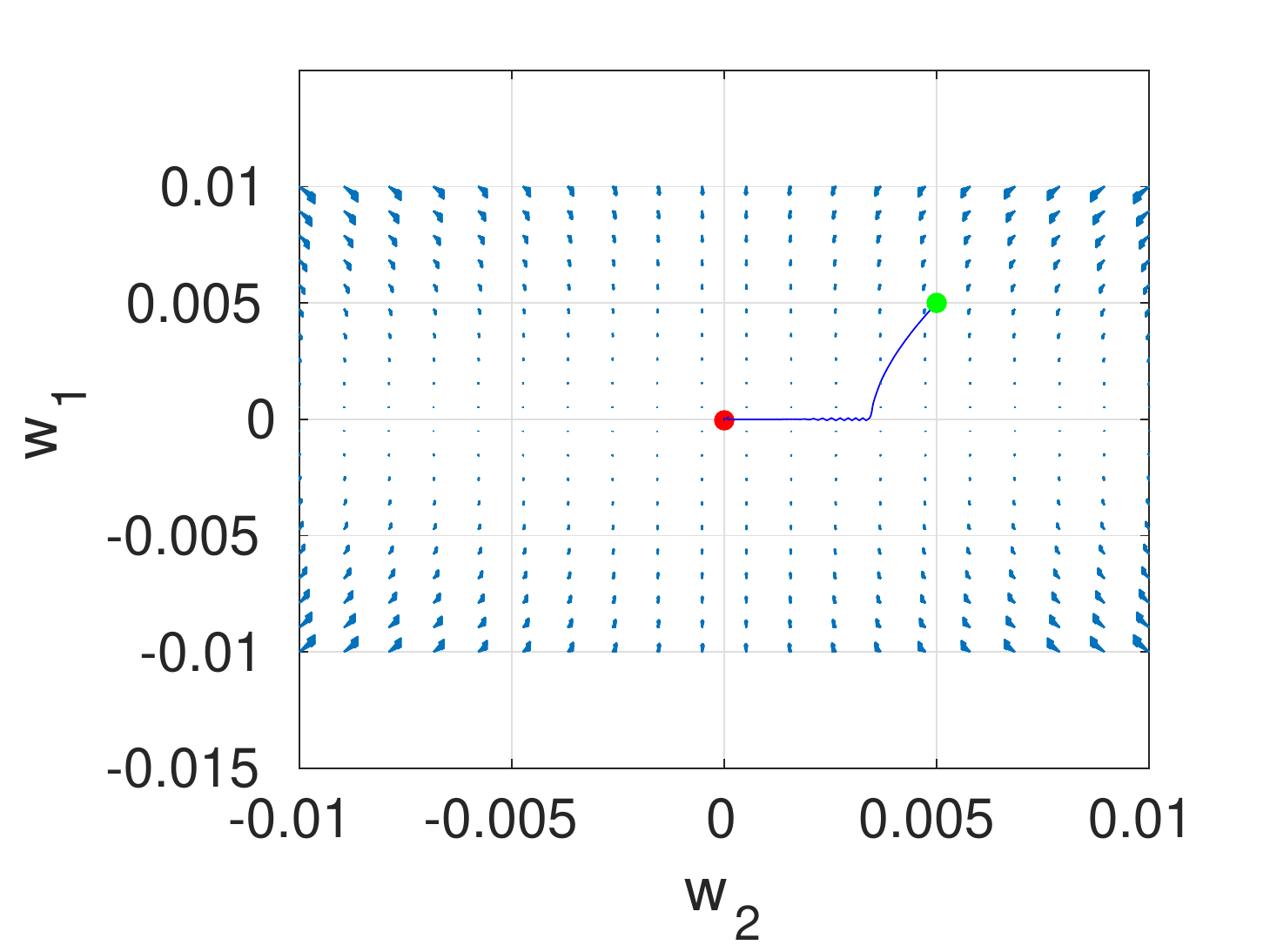}
        \caption{MaxEnt-ARL: Trajectory}
    \end{subfigure}
    \begin{subfigure}[t]{0.24\textwidth}
        \includegraphics[width=\textwidth]{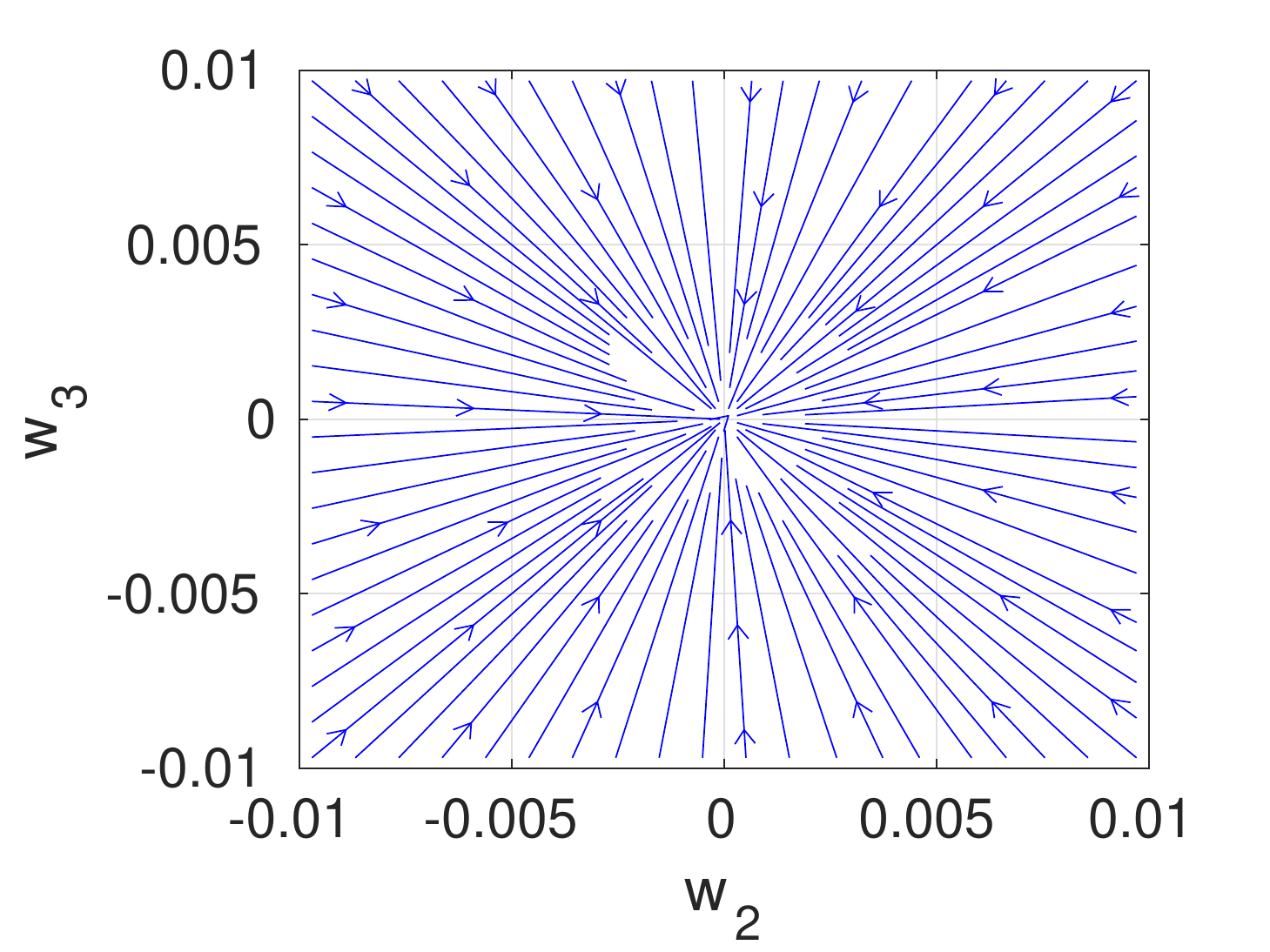}
        \caption{MaxEnt-ARL:  $w_1=0$}
    \end{subfigure}
     \begin{subfigure}[t]{0.24\textwidth}
        \includegraphics[width=\textwidth]{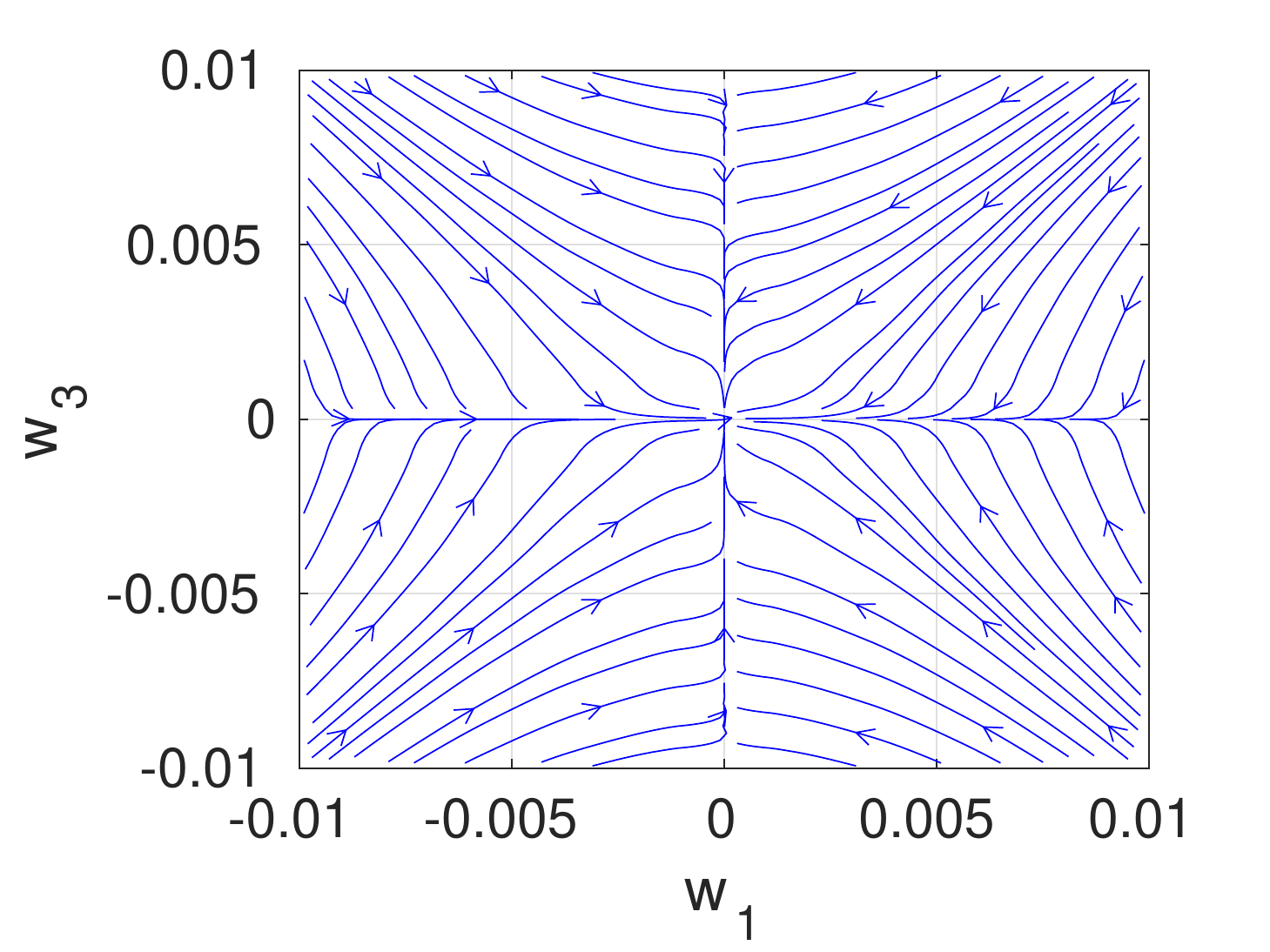}
        \caption{MaxEnt-ARL: $w_2=0$}
    \end{subfigure}
    \begin{subfigure}[t]{0.24\textwidth}
        \includegraphics[width=\textwidth]{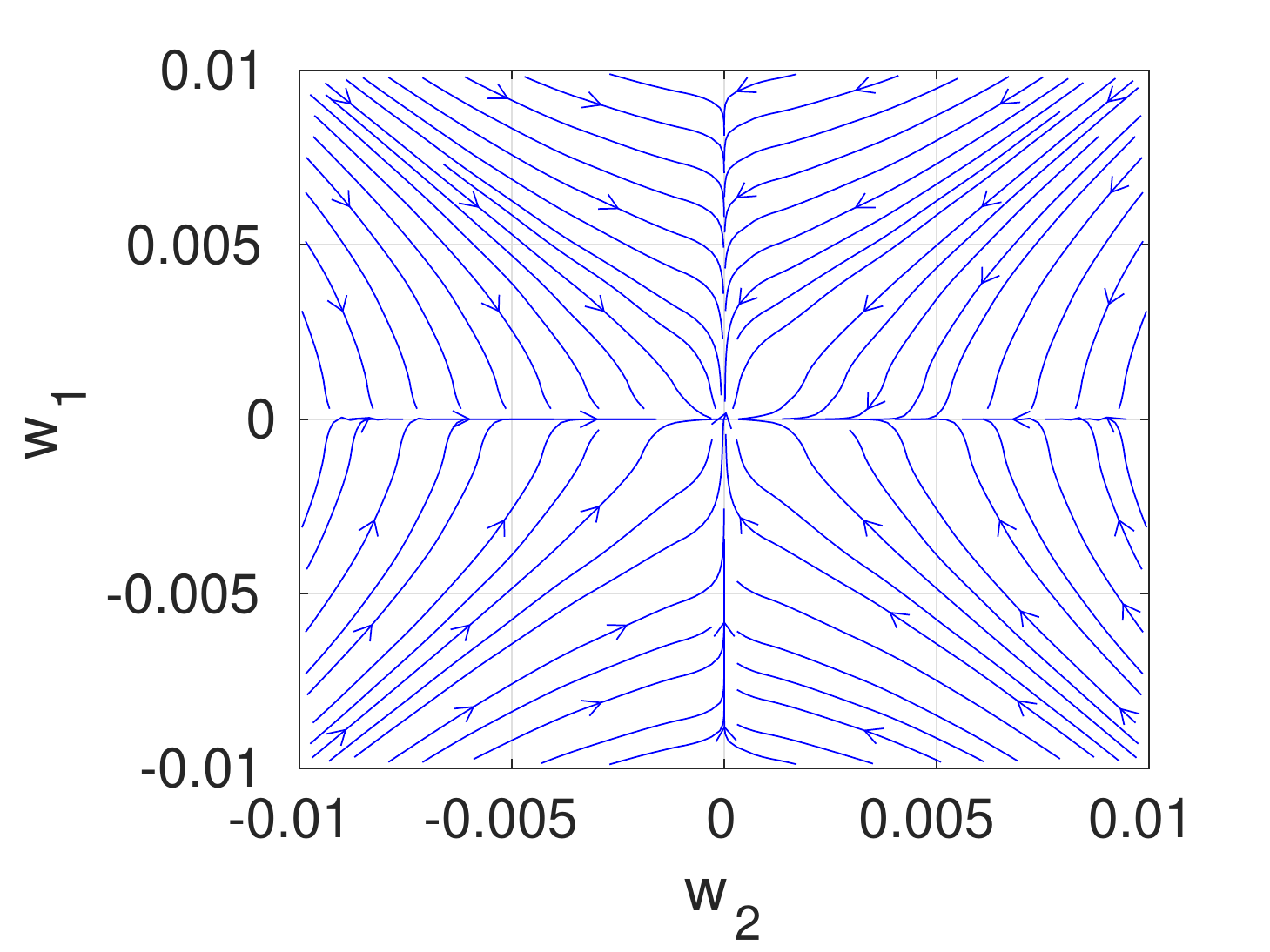}
        \caption{MaxEnt-ARL: $w_3=0$}
    \end{subfigure}
    \caption{Streamline plots for linear three-player game near stationary point (0,0,0). Trajectories start at the {\color{green}{green}} point and converge to the {\color{red}{red}} point by following the vector field. (a) and (e) shows the top-view of the 3-D trajectories. 
    When $w_1=0$ the trajectories suggest that both ML-ARL and MaxEnt-ARL converge to the local optima, ($w_1=w_2=w_3=0$). When $w_2=0$, the MaxEnt-ARL trajectories converge to the local optima. The ML-ARL trajectories converge to the optima only when they start far away from $0$ along $w_3$. The trajectories starting closer to $w_3=0$, however, do not converge to $w_1=0$. When $w_3=0$, the game reduces to a two-player adversarial game (akin to a GAN\cite{goodfellow2014generative}), where ML-ARL shows non-convergent cyclic behavior while MaxEnt-ARL converges.\label{fig:trajectory_1d_gaussian}}
\end{figure*}

\subsection{Mixture of Gaussians}
In this experiment we seek to visualize and compare the representation learned by MaxEnt-ARL and ML-ARL. We consider a mixture of 4 Gaussians with means $\mu$ at  $((1,1),(2,1.5),(1.5,2.5),(2.5,3))$ and variance $\sigma=0.3$ in each case. Our model is a neural network with 2 hidden layer with 2 neuron in each layer. Each data sample has two attributes, color and shape. We setup the ARL problem with shape as the target attribute and color as the sensitive attribute. The encoder is a neural network with one hidden layer, mapping the 2-D shape into another 2-D embedding, and both the predictor and discriminator are logistic regression classifiers. The trade-off parameter is set to $\alpha=0.1$ and the parameters are learned using the Adam optimizer with learning rate of $10^{-4}$. After learning the embedding function, we freeze its parameters and learn a logistic classifier as the adversary. The test accuracy of the adversary is 63\% for MaxEnt-ARL and 70\% for ML-ARL. Therefore, by optimizing the entropy instead of the likelihood MaxEnt-ARL is able to leak less information about the sensitive label compared to ML-ARL. Figure \ref{fig:toy_4_gaussian} shows the data and the learned embeddings.
\begin{figure*}[!ht]
    \centering
    \begin{subfigure}[t]{0.32\textwidth}
        \includegraphics[width=\textwidth]{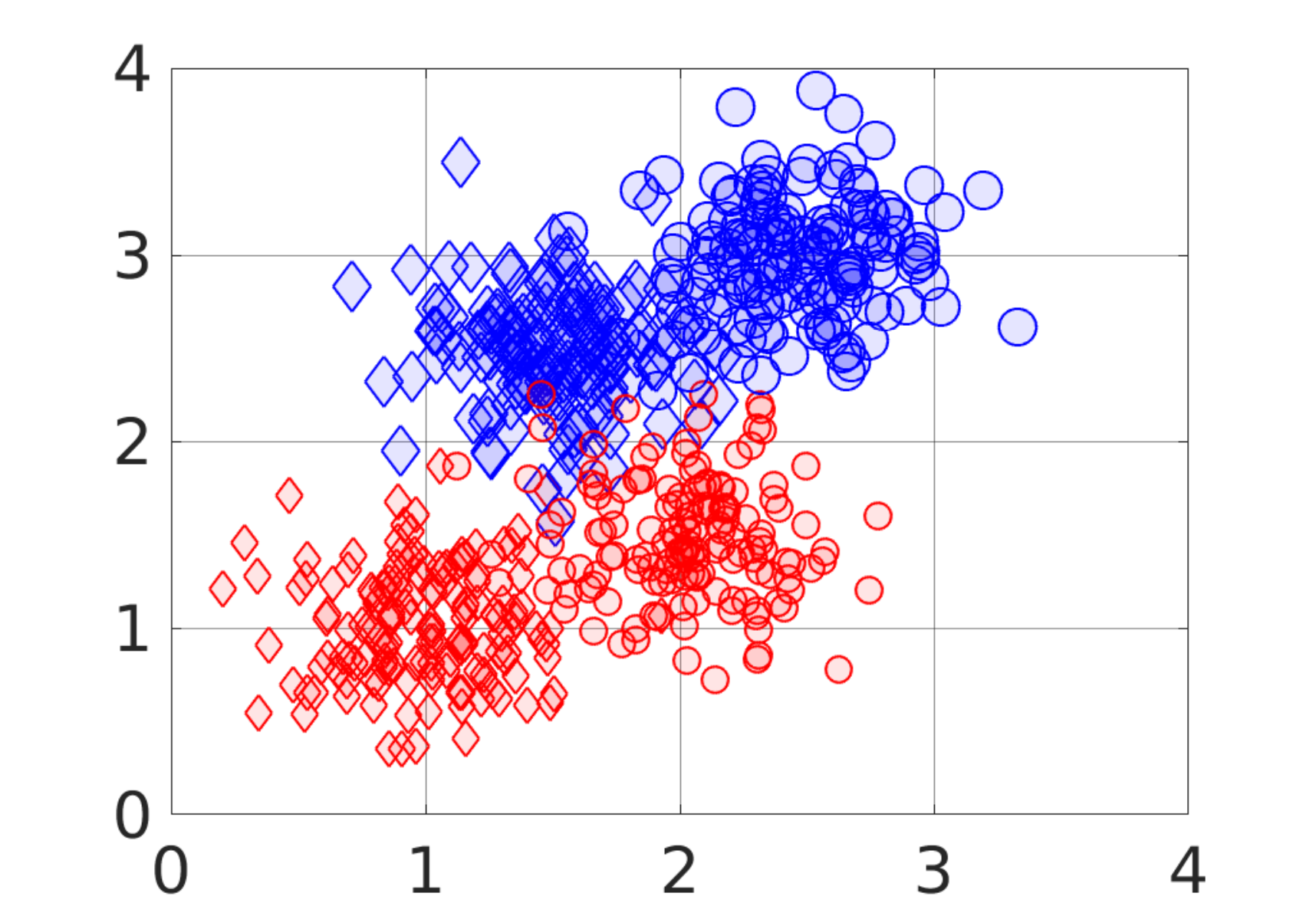}
        \caption{Input}
    \end{subfigure}
    \begin{subfigure}[t]{0.32\textwidth}
        \includegraphics[width=\textwidth]{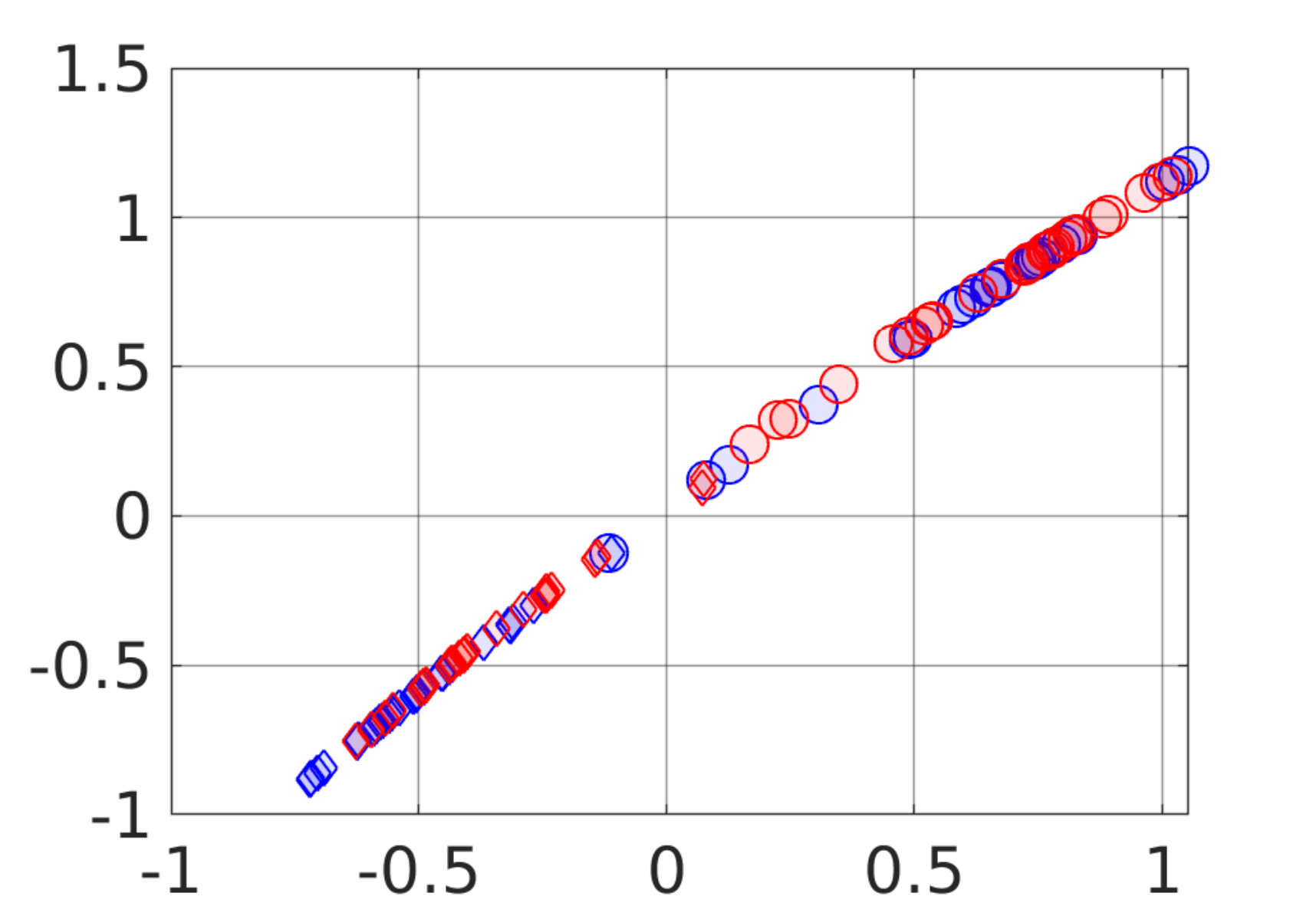}
        \caption{ML-ARL}
    \end{subfigure}
     \begin{subfigure}[t]{0.32\textwidth}
        \includegraphics[width=\textwidth]{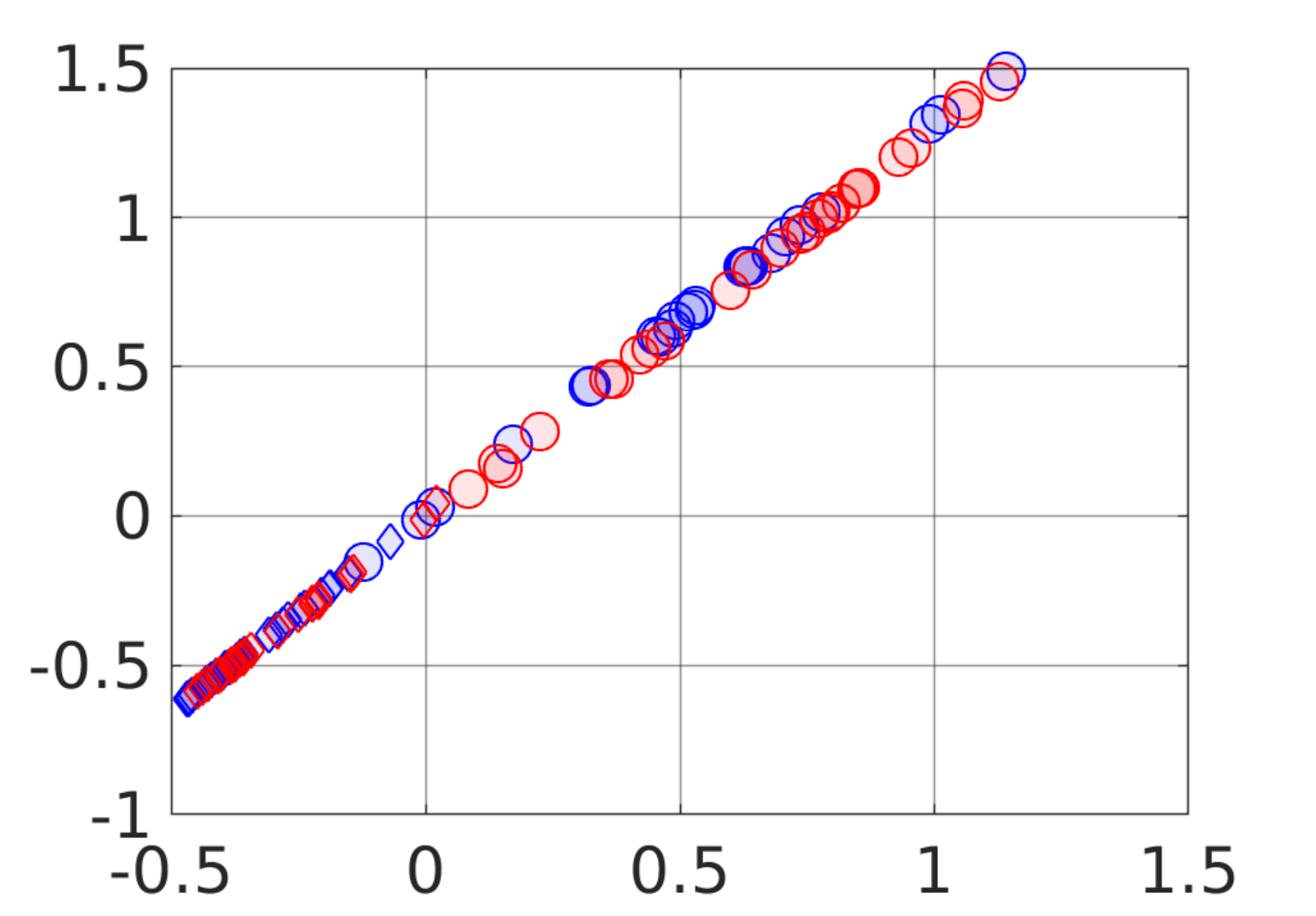}
        \caption{MaxEnt-ARL}
    \end{subfigure}
    \caption{Samples from four Gaussians with target (shape) and sensitive attributes (color). (a) input space, (b) learned embedding $z$ for ML-ARL, (c) learned embedding for MaxEnt-ARL. We can now notice that ML-ARL has some isolated samples with different colors (sensitive label), while MaxEnt-ARL results in slightly better mixing of the colors.\label{fig:toy_4_gaussian}}
\end{figure*}

\begin{figure}[h]
    \centering
    \begin{subfigure}[t]{0.23\textwidth}
        \includegraphics[width=\textwidth]{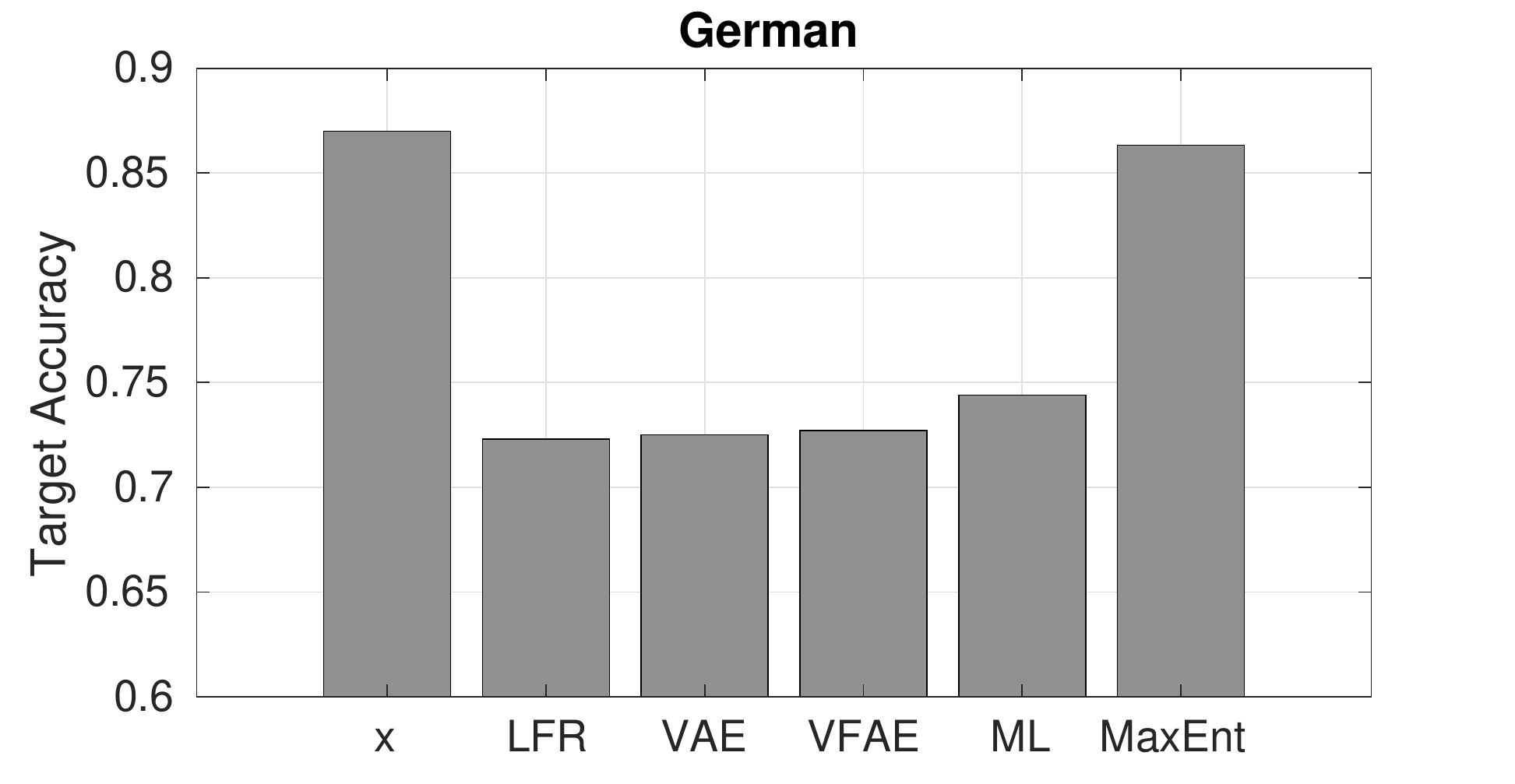}
        \caption{Target Attribute: Credit}
    \end{subfigure}
    \begin{subfigure}[t]{0.23\textwidth}
        \includegraphics[width=\textwidth]{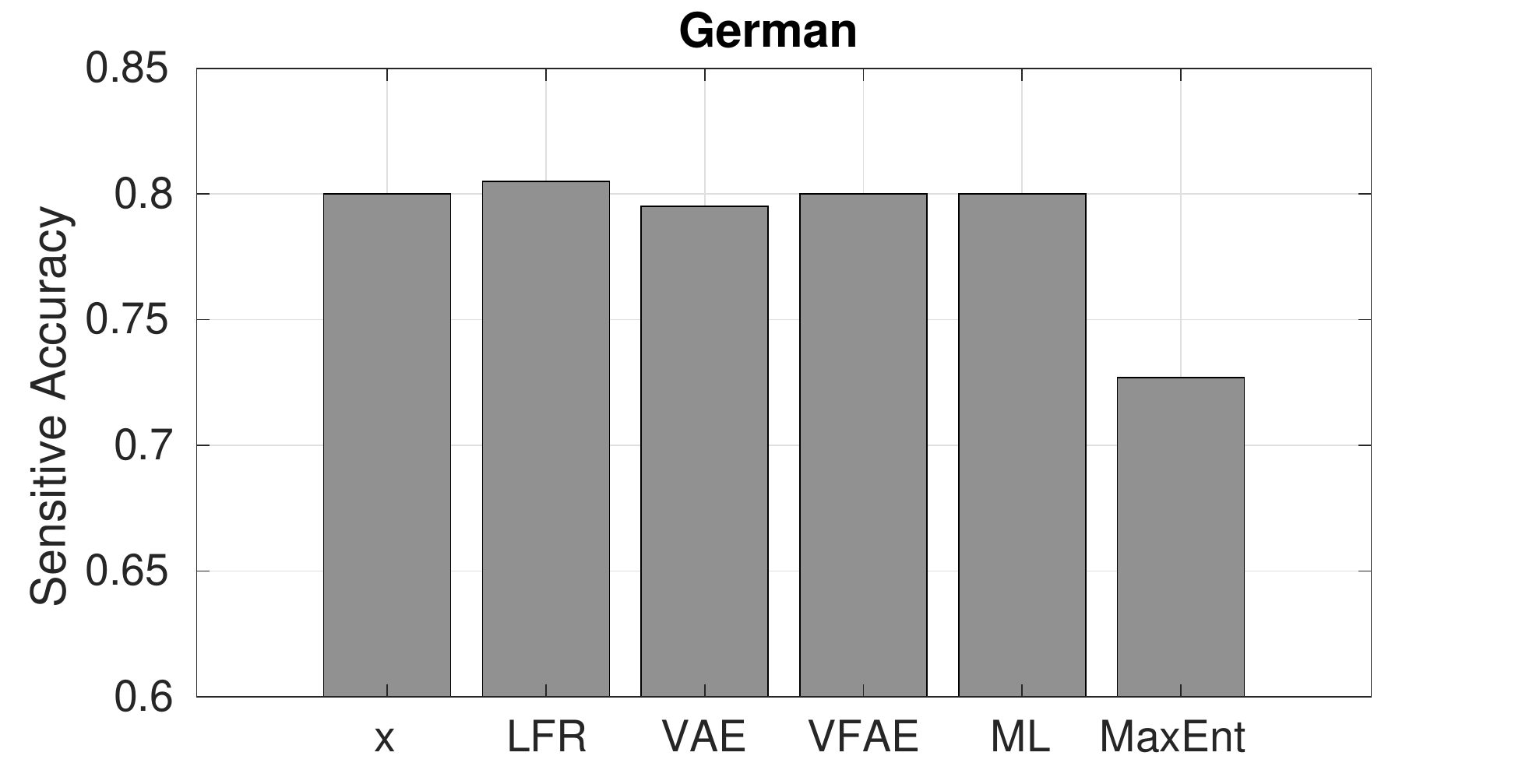}
        \caption{Sensitive Attribute: Gender}
    \end{subfigure}
    \begin{subfigure}[t]{0.23\textwidth}
        \includegraphics[width=\textwidth]{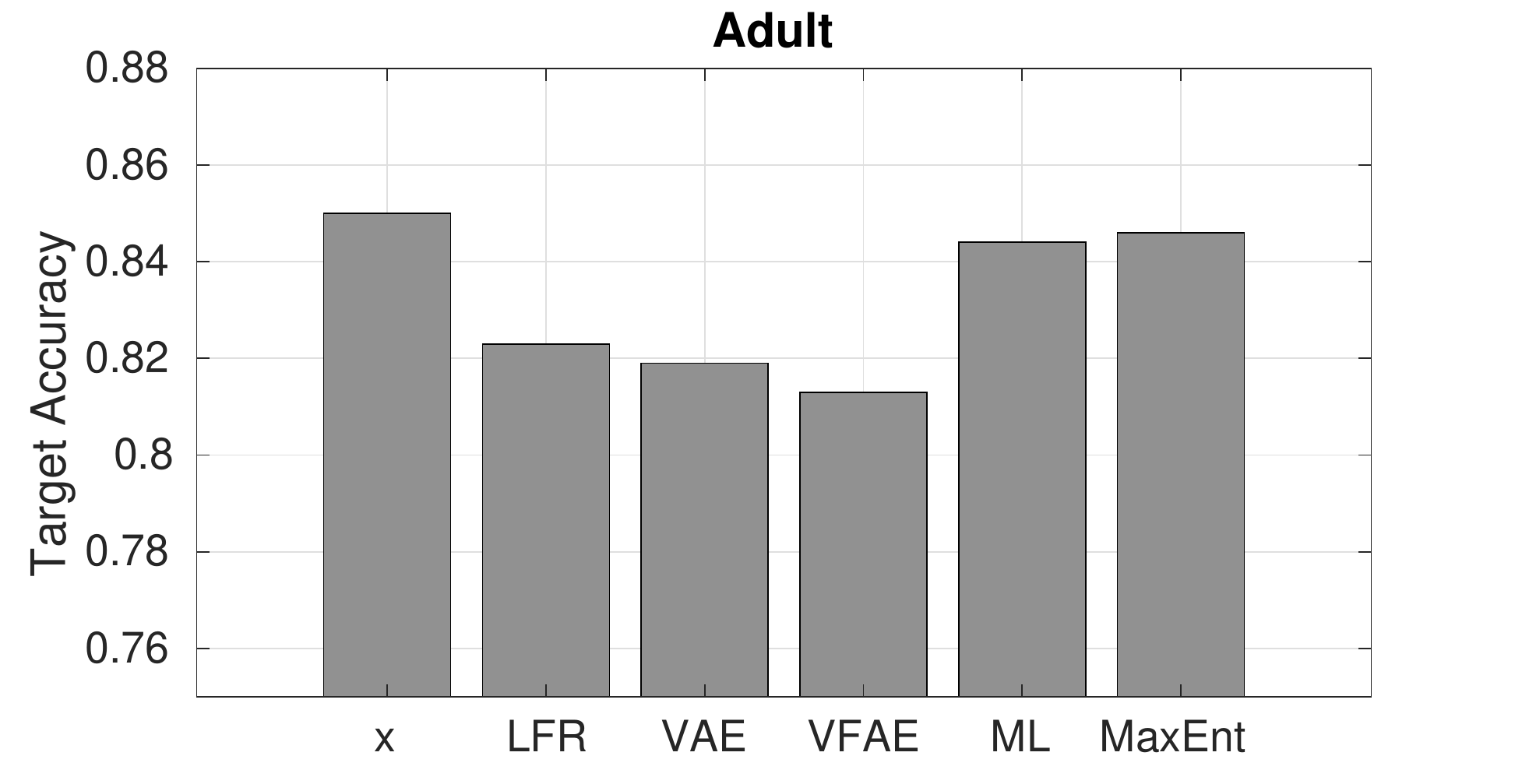}
        \caption{Target Attribute: Income}
    \end{subfigure}
    \begin{subfigure}[t]{0.23\textwidth}
        \includegraphics[width=\textwidth]{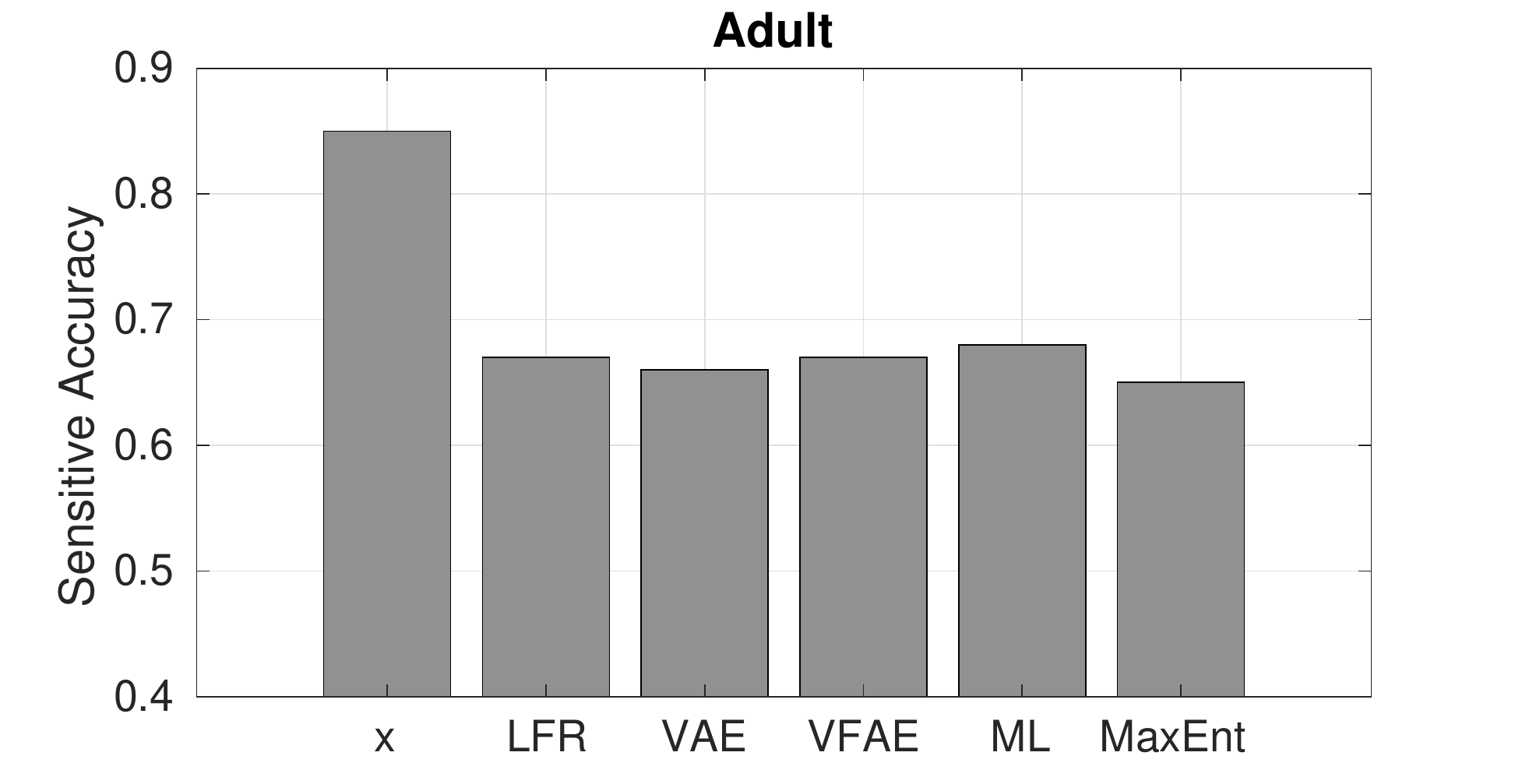}
        \caption{Sensitive Attribute: Gender}
    \end{subfigure}
    \caption{Representation Learning for Fair Classification \label{fig:fair}}
\end{figure}

\subsection{Fair Classification}
We consider the setting of fair classification on two datasets from the UCI ML-repository~\cite{Dua:2019}, (a) The German credit dataset
with 20 attributes for 1000 instances with target label being classifying bank account holders with good or bad credit and gender being the sensitive attribute, (b) The Adult income dataset has 45,222 instances with 14 attributes. The target is a binary label of annual income more or less than $\$50,000$, while gender is the sensitive attribute. For both ML-ARL and MaxEnt-ARL, the encoder is a NN with one hidden layer, discriminator is a NN with 2 hidden layers, and target predictor is linear logistic regression. Following ML-ARL~\cite{xie2017controllable} we choose 64 units in each hidden layer. We compare both ARL formulations with state-of-the-art baselines LFR (Learning Fair Representations~\cite{zemel2013learning}), VAE (Variational Auto-encoder~\cite{kingma2013auto}) and VFAE (Variational Fair Auto-encoder~\cite{louizos2015variational}). For MaxEnt-ARL, after learning the embedding, we again learn an adversary to extract the sensitive attribute.

Figure~\ref{fig:fair} show the results for the German and Adult datasets, for both the target and sensitive attributes. For German data, MaxEnt-ARL's prediction accuracy is 86.33\% which is close to that of the original data (87\%). Other models such as, LFR, VAE, VFAE and ML-ARL have target accuracies of 72.3\%, 72.5\%, 72.7\% and 74.4\% respectively. On the other hand, for the sensitive attribute, the MaxEnt-ARL adversary's accuracy is 72.7\%. Other models reveal much more information with adversary accuracies of 80\%, 80.5\%, 79.5\%, 79.7\% and 80.2\% for the original data, LFR, VAE, VFAE and ML-ARL, respectively. For the adult income dataset, the target accuracy for original data, ML-ARL and MaxEnt-ARL is 85\%, 84.4\% and 84.6\%, respectively, while the adversary's performance on the sensitive attribute is 67.7\% and 65.5\% for ML-ARL and MaxEnt-ARL, respectively.

\subsection{Illumination Invariant Face Classification}
We consider the task of face classification under different illumination conditions. We used the Extended Yale B dataset~\cite{georghiades2001few} comprising of face images of 38 people under different lighting conditions (directions of the light source) : upper right, lower right, lower left, upper left, or the front. Our target task is to identify one of the 38 people in the dataset with the direction of the light source being the sensitive attribute. We follow the experimental setup of Xie et al.~\cite{xie2017controllable} and Louizos et al.~\cite{louizos2015variational} using the same train/test split strategy and no validation set. $38\times5 = 190$  samples are used for training and the rest of the 1,096 data samples are used for testing. Following the model setup in ~\cite{xie2017controllable}, the encoder is a one layer neural network, target predictor is a linear layer and the discriminator has two hidden layers where each hidden layer consists of 100 units. The parameters are trained using Adam~\cite{kingma2014adam} with a learning rate of $10^{-4}$ and weight decay of $5\times 10^{-2}$.

\begin{table}[t]
\caption{Illumination Invariant Face Classification (\%)\label{tab:yaleface_b}}
\begin{center}
 \begin{tabular}{c ||c |c} 
 \hline
 Method & $s$ (lighting) & $t$ (identity) \\ \hline\hline
 LR & 96 & 78\\ \hline
NN + MMD~\cite{li2014learning} &- & 82 \\ \hline
VFAE~\cite{louizos2015variational} & 57 & 85 \\ \hline
ML-ARL~\cite{xie2017controllable} & 57 & \textbf{89}\\ \hline
Maxent-ARL & \textbf{40 }& \textbf{89} \\ \hline \hline
\end{tabular}
\end{center}
\end{table}

We report baseline ~\cite{li2014learning, louizos2015variational, xie2017controllable} results for this experiment in Table~\ref{tab:yaleface_b} and compare with the proposed MaxEnt-ARL framework. Louizos et al.~\cite{louizos2015variational} regularize their neural networks via Maximum Mean Discrepancy to remove lighting conditions from data whereas Xie et al.~\cite{xie2017controllable} use the ML-ARL framework. The MaxEnt-ARL achieves an accuracy of 89\% for identity classification (same as ML-ARL) while outperforming MMD (82\%) and VFAE (85\%). In terms of protecting sensitive attribute i..e, illumination direction, adversary's classification accuracy reduces from 57\% for ML-ARL to 40.2\% for MaxEnt-ARL. It is clear from the table that, MaxEnt-ARL is able to remove more information from the image compared to the baselines.

\subsection{CIFAR-10}
We create a new binary target classification problem on the CIFAR-10 dataset\cite{krizhevsky2009learning}. The CIFAR-10 dataset consists of 10 basic classes, namely, (`airplane', `automobile', `bird', `cat', `deer', `dog', `frog', `horse', `ship', `truck'). We divide the classes into two groups: living and non-living objects. We expect the living objects to have visually discriminative properties like smooth shapes compared to regular geometric shapes of non-living objects. The target task is binary classification of an image into these two supersets with the underlying class label being the sensitive attribute. For example, the task of classifying an object as living (`dog' or `cat') or non-living (`ship' or `truck') should not reveal any information about its underlying identity (`dog', `cat', `truck' or `ship'). But as we will see, this is a challenging problem and the image representation might not be able to prevent leakage of the sensitive label.

\vspace{5pt}
\noindent\textbf{Implementation Details:} We adopt the ResNet-18 \cite{he2016identity} architecture as the encoder, and the discriminator and adversary are 2-layered neural networks with 256 and 64 neurons, respectively. The encoder and the target predictor are trained using SGD with momentum of 0.9, learning rate of $10^{-3}$ and weight-decay of $10^{-3}$ for the prediction task. Both the discriminator and the adversary, however, are trained using Adam with a learning rate of $10^{-4}$ and weight-decay of $10^{-3}$ for 300 epochs.

\vspace{5pt}
\noindent \textbf{Experimental Results:} We evaluate performance of the predictor and adversary as we vary the trade-off parameter $\alpha$. We first note that, ideally, the desired predictor accuracy is 100\%, adversary accuracy is 10\% (random chance for 10 classes) and adversary entropy is 2.3 nats (uniform distribution for 10 classes). Figure~\ref{fig:cifar_nondominated} (a)-(b) shows the trade-off achieved between predictor and adversary along with the corresponding normalized hyper-volume (HV). For the predictor and adversary accuracy, the HV corresponds to area above the trade-off curve, while for the predictor accuracy and adversary entropy the HV is the area under the curve. 

We obtain these results by repeating all the experiments five times and retaining the non-dominated solutions i.e., a solution that is no worse than any other solution in both the objectives. From these results, we observe that without privacy considerations, the representation achieves the best target accuracy but also leaks significant information. In contrast adversarial learning of the representation achieves a better trade-off between utility and information leakage. Among ARL approaches, we observe that MaxEnt-ARL is able to obtain a better trade-off compared to ML-ARL. Furthermore, among all possible solutions, MaxEnt-ARL achieves the solution closest to the ideal desired point.

\begin{figure*}[t]
    \centering
    \begin{subfigure}[t]{0.246\textwidth}
        \centering
        \includegraphics[width=\textwidth]{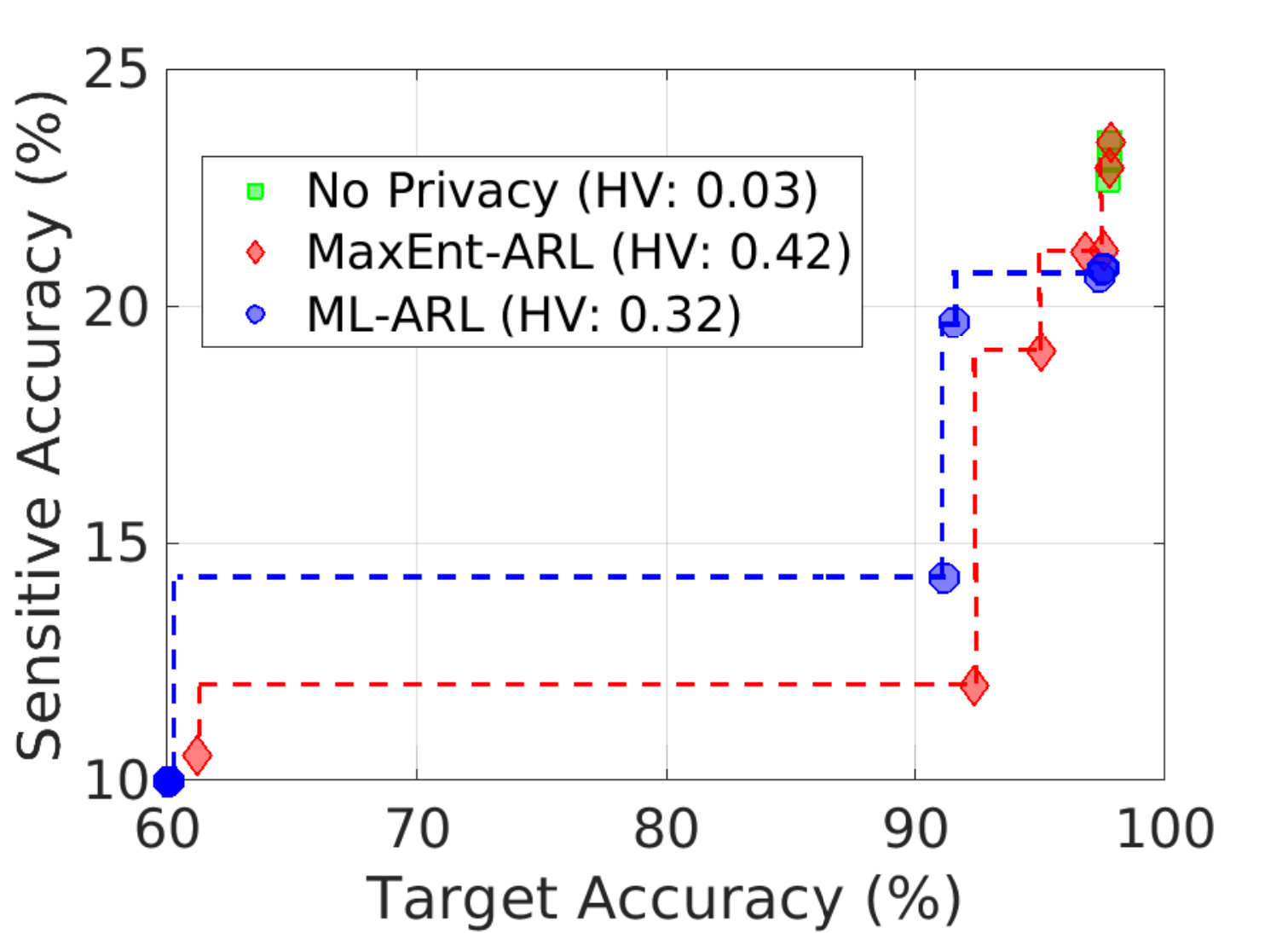}
        \caption{}
    \end{subfigure}
    \begin{subfigure}[t]{0.246\textwidth}
        \centering
        \includegraphics[width=\textwidth]{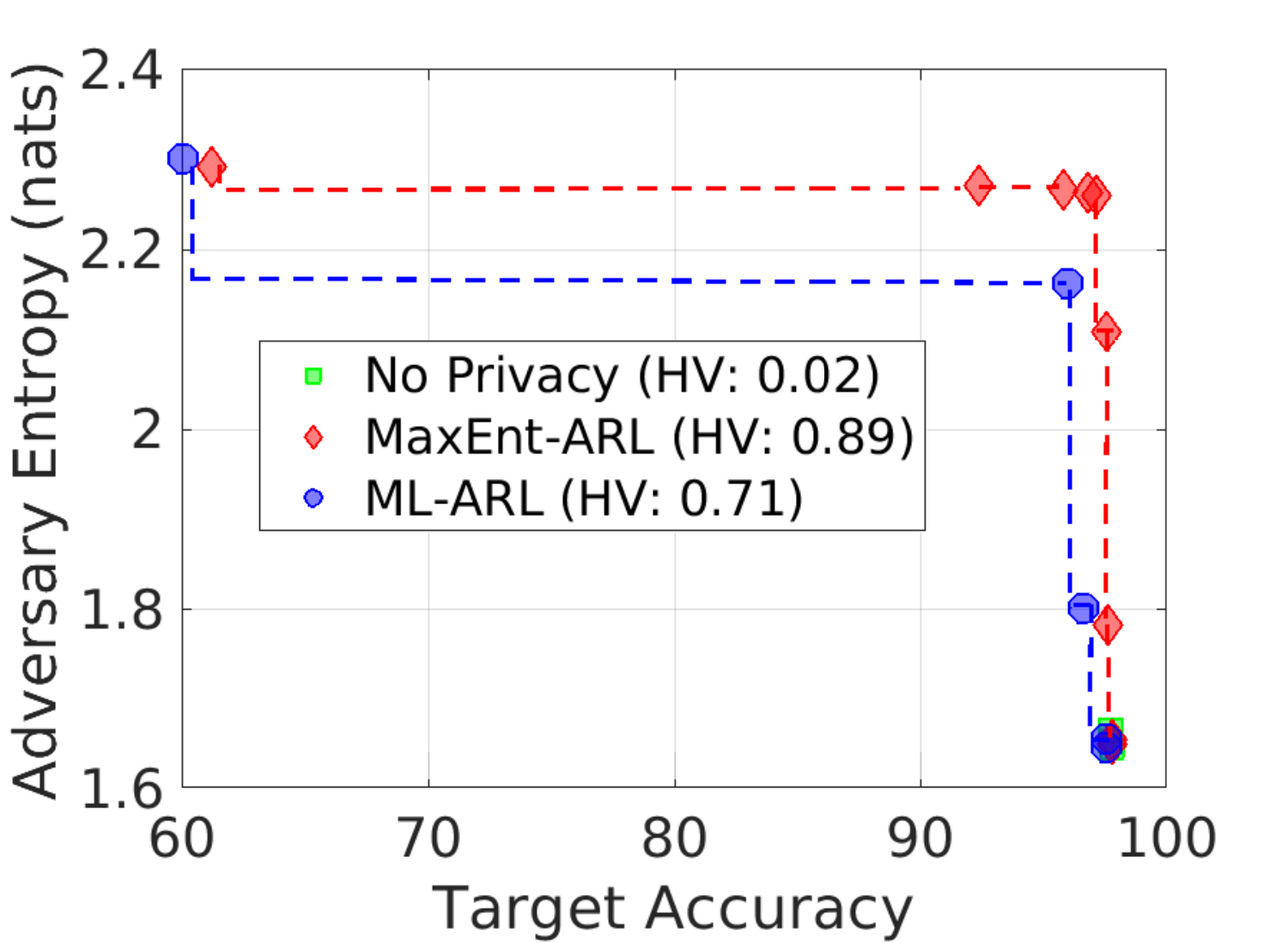}
        \caption{}
    \end{subfigure}
    \begin{subfigure}[t]{0.246\textwidth}
        \centering
        \includegraphics[width=\textwidth]{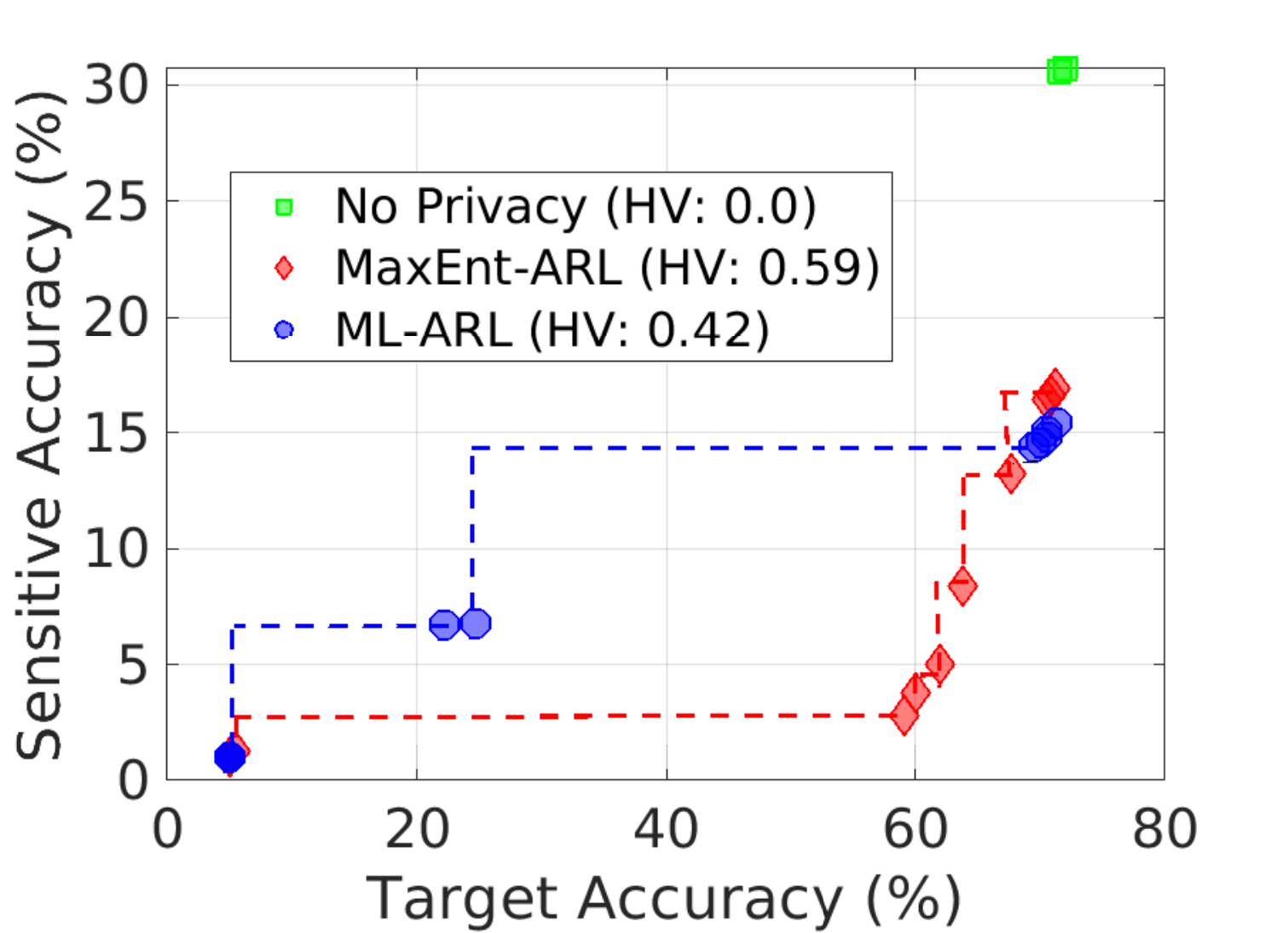}
        \caption{}
    \end{subfigure}
    \begin{subfigure}[t]{0.246\textwidth}
        \centering
        \includegraphics[width=\textwidth]{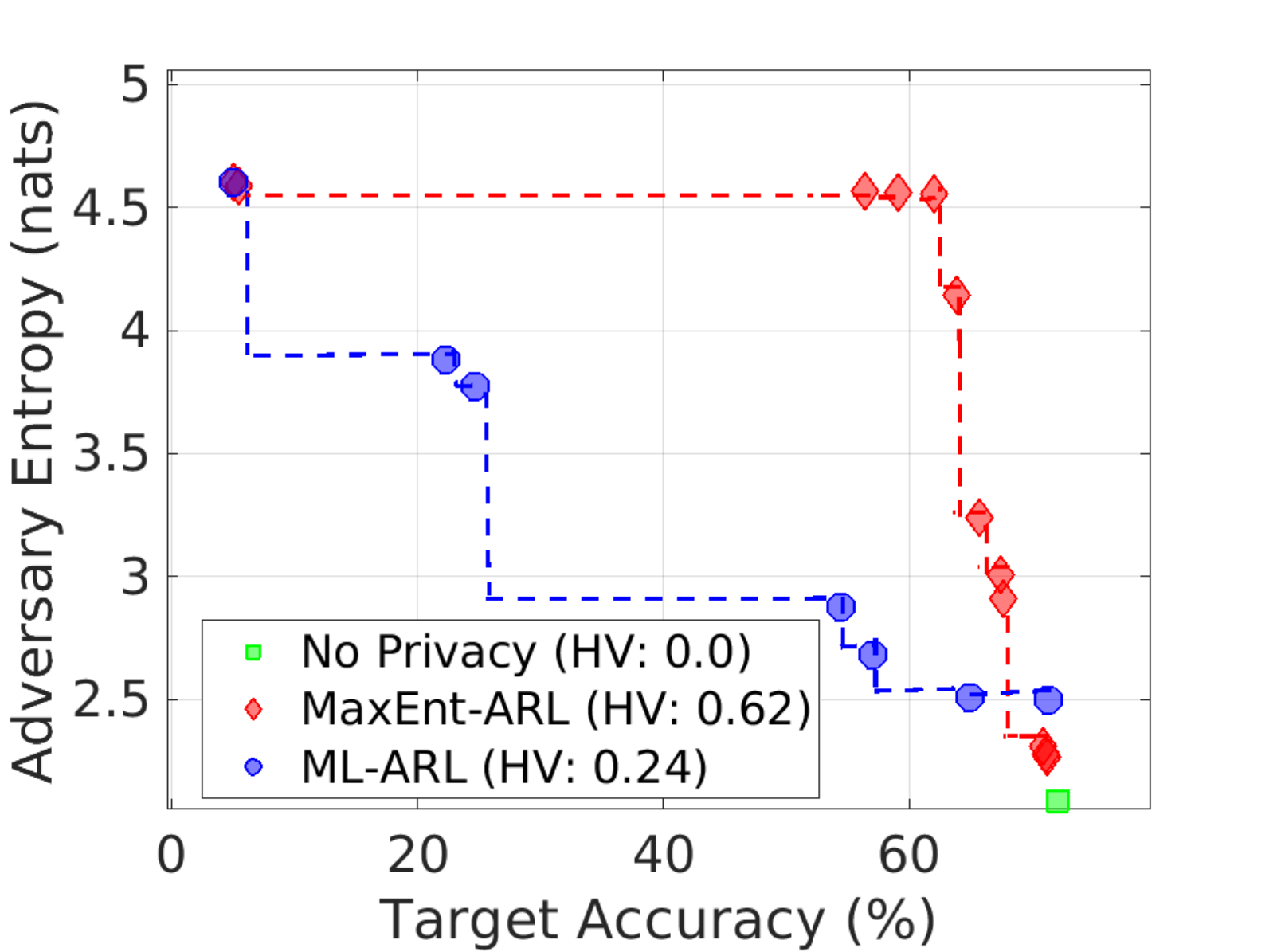}
        \caption{}
    \end{subfigure}
    \caption{\textbf{Adversary Representation Learning on CIFAR Datasets:} Trade-off fronts for two different ARL approaches, ML-ARL and MaxEnt-ARL, in comparison to standard no privacy representation learning. Plots (a)-(b) and (c)-(d) correspond to CIFAR-10 and CIFAR-100 experiments, respectively. In (a) and (c) the ideal desired solution is the bottom right corner, while in (b) and (d) it is the top right corner. HV in the legend corresponds to normalized hyper-volume. Exact numerical values are available in the supplementary material. \label{fig:cifar_nondominated}}
\end{figure*}

\subsection{CIFAR-100}
\begin{table}[t]
\caption{Main classes and Superclasses in CIFAR-100\label{table:cifar100summary}}
\centering
\scalebox{0.6}{
\begin{tabular}{|c|c|}\hline
Superclass & Main Class\\ \hline \hline
aquatic mammals &	beaver, dolphin, otter, seal, whale \\ \hline
fish &	aquarium fish, flatfish, ray, shark, trout \\ \hline
flowers &	orchids, poppies, roses, sunflowers, tulips \\ \hline
food containers &	bottles, bowls, cans, cups, plates\\ \hline
fruit and vegetables &	apples, mushrooms, oranges, pears, sweet peppers\\ \hline
household electrical devices &	clock, computer keyboard, lamp, telephone, television\\ \hline
household furniture &	bed, chair, couch, table, wardrobe\\ \hline
insects &	bee, beetle, butterfly, caterpillar, cockroach\\ \hline
large carnivores &	bear, leopard, lion, tiger, wolf\\ \hline
large man-made outdoor things & 	bridge, castle, house, road, skyscraper\\ \hline
large natural outdoor scenes &	cloud, forest, mountain, plain, sea\\ \hline
large omnivores and herbivores &	camel, cattle, chimpanzee, elephant, kangaroo\\ \hline
medium-sized mammals &	fox, porcupine, possum, raccoon, skunk\\ \hline
non-insect invertebrates &	crab, lobster, snail, spider, worm\\ \hline
people &	baby, boy, girl, man, woman\\ \hline
reptiles &	crocodile, dinosaur, lizard, snake, turtle\\ \hline
small mammals & 	hamster, mouse, rabbit, shrew, squirrel\\ \hline
trees &	maple, oak, palm, pine, willow\\ \hline
vehicles 1 &	bicycle, bus, motorcycle, pickup truck, train\\ \hline
vehicles 2 &	lawn-mower, rocket, streetcar, tank, tractor\\ \hline
\hline
\end{tabular}}
\end{table}

We formulate a new privacy problem on the CIFAR-100 dataset. The dataset consists of 100 classes and are grouped into 20 superclasses (Table \ref{table:cifar100summary}). Each image has a ``fine'' (the class to which it belongs) and a ``coarse" (the superclass to which it belongs) label. We treat the ``coarse" (superclass) and ``fine" (class) labels as the target and sensitive attribute, respectively. So the encoder is tasked to learn features of the super-classes while not revealing the information of the underlying classes. We adopt ResNet-18 as the encoder while the predictor, discriminator and adversary are all 2-layered fully connected networks. The adversarial game is trained for 150 epochs, followed by training the adversary for 100 epochs while the parameters of the encoder are frozen. 

Just as in the case of CIFAR-10, we report the trade-off achieved between predictor and adversary along with the corresponding normalized hyper-volume (HV) in Fig. \ref{fig:cifar_nondominated} (c)-(d). Here we note that, ideally, we desire predictor accuracy of 100\%, adversary accuracy of 1\% (random chance for 100 classes) and adversary entropy of $\ln{100}=4.61$ nats (uniform distribution for 100 classes). We make the following observations from the results. Firstly, the performance of the different approaches suggest that this task is significantly harder than the CIFAR-10 task, with much lower achievable target accuracy and much higher adversary accuracy. Secondly, representation learning without privacy considerations leaks significant amount of information. Thirdly, MaxEnt-ARL is able to significantly outperform ML-ARL on this task, achieving trade-off solutions that are far better, both in terms of adversary accuracy and entropy of adversary.

\section{Conclusion}

This paper introduced a new formulation of \emph{Adversarial Representation Learning} called \emph{Maximum Entropy Adversarial Representation Learning} (MaxEnt-ARL) for mitigating information leakage from learned representations under an adversarial setting. In this model, the encoder is optimized to maximize the entropy of the adversary's distribution of a sensitive attribute as opposed to minimizing the likelihood (ML-ARL) of the true sensitive label. We analyzed the equilibrium and convergence properties of the ML-ARL and MaxEnt-ARL. Numerical experiments on multiple datasets suggests that MaxEnt-ARL is a promising framework for preventing information leakage from image representations, outperforming the baseline minimum likelihood objective.

\section{Appendix}

In this appendix we include proof of Theorem 1 in Section \ref{sec:proof-th1}, Corollary 1.1 in Section \ref{sec:proof-th2} and finally provide the numerical values of the trade-off fronts in the CIFAR-10 and CIFAR-100 experiment in Section \ref{sec:cifar}.

\subsection{Proof of Theorem 1 \label{sec:proof-th1}}
\begin{theorem}\label{th1}
Given a fixed encoder $E$, the optimal discriminator is $q_D(s|E(\bm{x};\bm{\theta}_E);\bm{\theta}_D^{*})=p(s|E(\bm{x};\bm{\theta}_E))$ and the optimal predictor is $q_T(t|E(\bm{x};\bm{\theta}_E);\bm{\theta}_T^{*})=p(t|E(\bm{x};\bm{\theta}_E))$.
\end{theorem}
\begin{proof}
\noindent Let, $\bm{z}$ be the fixed encoder output from input $\bm{x}$ i.e.  $\bm{z}=E(\bm{x};\bm{\theta}_E)$. Let, $p(\bm{x},t,s)$ be the true joint distribution of the variables, i.e. input $\bm{x}$, target label $t$ and sensitive label $s$. The fixed encoder is a deterministic transformation of $\bm{x}$ and generates an implicit distribution $p(\bm{z},t,s)$. 

\vspace{5pt}
\noindent\emph{Discriminator:} The objective of the discriminator is,
\begin{equation}
    \begin{aligned}
    V_1(\bm{\theta}_E,\bm{\theta}_D) &= KL\left(p\left(s|\bm{x}\right)\|q_D\left(s|E(\bm{x};\bm{\theta}_E);\bm{\theta}_D\right)\right)\\
    &=\mathop{\mathbb{E}}_{(\bm{z},t,s)\sim p(\bm{z},t,s)}~ -\log{q_D(s|\bm{z};\bm{\theta}_D)}\\
    &=-\sum_{\bm{x},t,s} p(\bm{x},t,s) \log{q_D(s|\bm{z};\bm{\theta}_D)}\\
    \mbox{s.t.}~~&\sum_{s}q_D(s|\bm{z};\bm{\theta}_D)=1,~~\forall \bm{z}\\
    &~~~~~~~q_D(s|\bm{z};\bm{\theta}_D)\geq 0,~~\forall \bm{z}
    \end{aligned}
\end{equation}
The Lagrangian dual of the problem can be written as
\begin{align*}
    L=V_1(\bm{\theta}_E,\bm{\theta}_D) + &\sum_{\bm{z}} \lambda(\bm{z})\left( \sum_{s}q_D(s|\bm{z};\bm{\theta}_D)-1\right)
\end{align*} 
Now we take partial derivative of $L$ w.r.t. $q_D(s|\bm{z};\bm{\theta}_D^{*})$, the distribution of optimal discriminator. Therefore, the optimal discriminator satisfies,
\begin{equation}
\begin{aligned}
    \frac{\partial L}{\partial q_D(s|\bm{z};\bm{\theta}_D^{*})}&=~0\\
    \Rightarrow -\frac{\sum_{t} p(\bm{z},t,s)} {q_D(s|\bm{z};\bm{\theta}_D^{*})}+\lambda(\bm{z})&=0\\
    \Rightarrow \lambda(\bm{z})q_D(s|\bm{z};\bm{\theta}_D^{*}) &= p(\bm{z},s)
\end{aligned}
\end{equation}
where we used the fact that, $\sum_{t} p(\bm{z},t,s)=p(\bm{z},s)$. Now summing w.r.t. to variable $s$ on the both sides of last line and using the fact that $\sum_{s}q_D(s|\bm{z};\bm{\theta}_D^{*})=1$ we get,
\begin{equation}
    \lambda(\bm{z})=p(\bm{z}) \nonumber
\end{equation}
By substituting $\lambda(\bm{z})$ we obtain the solution for the optimal discriminator,
\begin{equation}
    q_D(s|\bm{z};\bm{\theta}_D^{*})=\frac{p(\bm{z},s)}{p(\bm{z})} = p(s|\bm{z})
\end{equation}
Therefore, \[q_D(s|E(\bm{x};\bm{\theta}_E);\bm{\theta}_D^{*})= p(s|E(\bm{x};\bm{\theta}_E))\]

\vspace{5pt}
\noindent\emph{Target Predictor:} The objective of the predictor is,
\begin{equation}
    \begin{aligned}
    V_2(\bm{\theta}_E,\bm{\theta}_T) &= KL\left(p\left(t|\bm{x}\right)\|q_T\left(t|E(\bm{x};\bm{\theta}_E);\bm{\theta}_T\right)\right)\\
    &=\mathop{\mathbb{E}}_{(\bm{z},t,s)\sim p(\bm{z},t,s)}~ -\log{q_T(t|\bm{z};\bm{\theta}_T)}\\
    &=-\sum_{\bm{x},t,s} p(\bm{x},t,s) \log{q_T(t|\bm{z};\bm{\theta}_T)}\\
    \mbox{s.t.}~~&\sum_{t}q_T(t|\bm{z};\bm{\theta}_T)=1,~~\forall \bm{z}\\
    &~~~~~~~q_T(t|\bm{z};\bm{\theta}_T)\geq 0,~~\forall \bm{z}
    \end{aligned}
\end{equation}
The Lagrangian dual of the problem can be written as
\begin{align*}
    L=V_2(\bm{\theta}_E,\bm{\theta}_T) + &\sum_{\bm{z}} \lambda(\bm{z})\left( \sum_{t}q_T(t|\bm{z};\bm{\theta}_T)-1\right)
\end{align*} 
Now we take partial derivative of $L$ w.r.t. $q_T(t|\bm{z};\bm{\theta}_T^{*})$, the distribution of optimal predictor. The optimal predictor satisfies the equation.
\begin{equation}
\begin{aligned}
    \frac{\partial L}{\partial q_T(t|\bm{z};\bm{\theta}_T^{*})}&=~0\\
    \Rightarrow -\frac{\sum_{s} p(\bm{z},t,s)} {q_T(t|\bm{z};\bm{\theta}_T^{*})}+\lambda(\bm{z})&=0\\
    \Rightarrow \lambda(\bm{z})q_T(t|\bm{z};\bm{\theta}_T^{*}) &= p(\bm{z},t)
\end{aligned}
\end{equation}
where we used the fact that, $\sum_{s} p(\bm{z},t,s)=p(\bm{z},t)$. Now summing w.r.t. to variable $t$ on the both sides of last line and using the fact that $\sum_{t}q_T(t|\bm{z};\bm{\theta}_T^{*})=1$ we get,
\begin{equation}
    \lambda(\bm{z})=p(\bm{z}) \nonumber
\end{equation}
By substituting $\lambda(\bm{z})$ we obtain the solution of the optimal discriminator
\begin{equation}
    q_T(t|\bm{z};\bm{\theta}_T^{*})=\frac{p(\bm{z},t)}{p(\bm{z})} = p(t|\bm{z})
\end{equation}
Therefore, \[q_T(t|E(\bm{x};\bm{\theta}_E);\bm{\theta}_T^{*})= p(t|E(\bm{x};\bm{\theta}_E))\]
\end{proof} 

\subsection{Proof of Corollary 1.1 \label{sec:proof-th2}}
\begin{corollary}\label{th2} 
When $s \perp \!\!\! \perp t$, let the optimum discriminator and predictor for an encoder $E$ be $q_D(s|E(\bm{x};\bm{\theta}_E);\bm{\theta}_D^{*})$ and $q_T(t|E(\bm{x};\bm{\theta}_E);\bm{\theta}_T^{*})$ respectively. The optimal encoder $E(\cdot)$ in the MaxEnt-ARL formulation induces a uniform distribution in the discriminator $q_D(s|E(\bm{x};\bm{\theta}_E^{*});\bm{\theta}_D^{*})$ over the classes of the sensitive attribute.
\end{corollary}

\begin{proof}
\noindent Here we will prove that, when discriminator is fixed, then the encoder learns a representation of data $\bm{x}$ such that  $q_D(s|E(\bm{x};\bm{\theta}_E^{*});\bm{\theta}_D^{*})=1/m$. First we note that although the discriminator is fixed, the discriminator probability $q_D(s|E(\bm{x};\bm{\theta}_E);\bm{\theta}_{D}^{*})$ can change by changing the encoder parameters $\bm{\theta}_E$. Optimization of the encoder in MaxEnt-ARL is formulated as:
\begin{equation}
\label{eq:obj_func_entropy}
\begin{aligned}
&\min~V = \min_{\bm{\theta}_E} \mbox{ } \mathop{\mathbb{E}}_{(\bm{x},t,s)\sim p(\bm{x},t,s)}\left[-\log q_T(t|E(\bm{x};\bm{\theta}_E);\bm{\theta}_T^{*})\right]\\ 
&+\alpha\mathop{\mathbb{E}}_{\bm{x}}\left[\sum_{i=1}^m q_D(s_i|E(\bm{x};\bm{\theta}_E);\bm{\theta}_D^{*})\log q_D(s_i|E(\bm{x};\bm{\theta}_E);\bm{\theta}_D^{*})\right] \\
&+ \log m \\
&\mbox{s.t.}~~\sum^m_{i=1}q_D(s_i|E(\bm{x};\bm{\theta}_E);\bm{\theta}_D^{*})=1\\
&~~~~~~~~~~~~~~q_D(s_i|E(\bm{x};\bm{\theta}_E);\bm{\theta}_D^{*})\geq 0,~~\forall i%\nonumber
\end{aligned}
\end{equation}

The Lagrangian dual of the problem can be written as,
\begin{align*}
    L=V - \lambda \left(\sum^m_{i=1}q_D(s_i|E(\bm{x};\bm{\theta}_E);\bm{\theta}_D^{*})-1\right)
\end{align*} 
Here $\lambda$ is a Lagrangian multiplier and is assumed to be a constant in the absence of any further information. Since $s \perp \!\!\! \perp t$, we have  $q_T(t|E(\bm{x};\bm{\theta}_E);\bm{\theta}_T^{*})$ is independent of $q_D(s|E(\bm{x};\bm{\theta}_E);\bm{\theta}_D^{*})$ given $E(\bm{x};\bm{\theta}_E)$ from Theorem \ref{th1}. Therefore, if we take derivative of $L$ w.r.t. $q_D(s_i|E(\bm{x};\bm{\theta}_E);\bm{\theta}_D^{*})$ and set it to zero we have:
\begin{equation}
    \begin{aligned}
    \frac{\partial L}{\partial q_D(s_i|E(\bm{x};\bm{\theta}_E);\bm{\theta}_D^{*})}&=~0\\
    \Rightarrow 1+\log{(q_D(s_i|E(\bm{x};\bm{\theta}_E);\bm{\theta}_D^{*})}-\lambda&=0\\
    \Rightarrow q_D(s_i|E(\bm{x};\bm{\theta}_E);\bm{\theta}_D^{*})&=\exp{(\lambda-1)}
    \end{aligned}
\end{equation}
Using the first (non-trivial) constraint, we have
\begin{align*}
    \sum^m_{i=1} q_D(s_i|E(\bm{x};\bm{\theta}_E);\bm{\theta}_D^{*})&=1\\
    \sum^m_{i=1} \exp{(\lambda-1)}&=1\\
    \exp{(\lambda-1)}\sum^m_{i=1}&1=1\\
    m(\exp{(\lambda-1)})&=1\\
    \lambda &= \log{\left(1/m\right)}+1\
\end{align*}
Hence, the probability distribution of the discriminator after the encoder's parameters $\bm{\theta}_E$ are optimized is $q_D(s_i|E(\bm{x};\bm{\theta}_E^{*});\bm{\theta}_D^{*})=1/m$. Thus, when the optimum discriminator parameters are fixed, the encoder optimizes the representation such that the discriminator does not leak any information, i.e., it induces a uniform distribution. 
\end{proof}

\subsection{CIFAR Trade-Off\label{sec:cifar}}

We report the numerical values of the target accuracy and adversary accuracy trade-off results on the CIFAR-10 and CIFAR-100 experiments in Table \ref{tab:cifar10_ll} and Table \ref{tab:cifar100_ll}, respectively. Similarly, we report the numerical values of the target accuracy and adversary entropy trade-off results on the CIFAR-10 and CIFAR-100 experiments in Table \ref{tab:cifar10_le} and Table \ref{tab:cifar100_le}, respectively.

\begin{table*}
\begin{subtable}{.5\linewidth}\centering
{\scalebox{0.8}{\begin{tabular}{c|ccc}
\hline
Target Accuracy (\%) & 97.75 & 97.73 & 97.68 \\
\hline
Adversary Accuracy (\%) & 23.44 & 23.09 & 22.68 \\
\hline
\end{tabular}}}
\caption{No Privacy}\label{tab:cifar10_ll_np}
\end{subtable}%
\begin{subtable}{.5\linewidth}\centering
{\scalebox{0.8}{\begin{tabular}{c|cccccc}
\hline
Target Accuracy (\%) & 97.52 & 97.44 & 97.35 & 91.52 & 91.15 & 60.00 \\
\hline
Adversary Accuracy (\%) & 20.83 & 20.77 & 20.64 & 19.68 & 14.27 & 10.00 \\
\hline
\end{tabular}}}
\caption{ML-ARL}\label{tab:cifar10_ll_ml}
\end{subtable}
\begin{subtable}{\linewidth}\centering
{\begin{tabular}{c|ccccccc}
\hline
Target Accuracy (\%) & 97.78 & 97.74 & 97.53 & 96.79 & 95.01 & 92.34 & 61.17 \\
\hline
Adversary Accuracy (\%) & 23.44 & 22.91 & 21.17 & 21.14 & 19.05 & 12.00 & 10.64 \\
\hline
\end{tabular}}
\caption{MaxEnt-ARL}\label{tab:cifar10_ll_maxent}
\end{subtable}
\caption{CIFAR-10: Target Accuracy (\%) vs Adversary Accuracy\label{tab:cifar10_ll}}
\end{table*}

\begin{table*}
\begin{subtable}{.5\linewidth}\centering
{\scalebox{0.8}{\begin{tabular}{c|ccc}
\hline
Target Accuracy (\%) & 97.75 & 97.73 & 97.71 \\
\hline
Adversary Entropy (nats) & 1.65 & 1.65 & 1.67 \\
\hline
\end{tabular}}}
\caption{No Privacy}\label{tab:cifar10_le_np}
\end{subtable}%
\begin{subtable}{.5\linewidth}\centering
{\scalebox{0.8}{\begin{tabular}{c|ccccc}
\hline
Target Accuracy (\%) & 97.52 & 97.50 & 96.58 & 95.97 & 60.00 \\
\hline
Adversary Entropy (nats) & 1.65 & 1.66 & 1.80 & 2.16 & 2.30 \\
\hline
\end{tabular}}}
\caption{ML-ARL}\label{tab:cifar10_le_ml}
\end{subtable}
\begin{subtable}{\linewidth}\centering
{\begin{tabular}{c|ccccccccc}
\hline
Target Accuracy (\%) & 97.78 & 97.74 & 97.58 & 97.53 & 97.14 & 96.79 & 95.76 & 92.34 & 61.17 \\
\hline
Adversary Entropy (nats) & 1.65 & 1.66 & 1.78 & 2.11 & 2.26 & 2.26 & 2.27 & 2.27 & 2.29 \\
\hline
\end{tabular}}
\caption{MaxEnt-ARL}\label{tab:cifar10_le_maxent}
\end{subtable}
\caption{CIFAR-10: Target Accuracy (\%) vs Adversary Entropy\label{tab:cifar10_le}}
\end{table*}

\begin{table*}
\begin{subtable}{.4\linewidth}\centering
{\scalebox{0.8}{\begin{tabular}{c|cc}
\hline
Target Accuracy (\%) & 71.99 & 71.56 \\
\hline
Adversary Accuracy (\%) & 30.69 & 30.59 \\
\hline
\end{tabular}}}
\caption{No Privacy}\label{tab:cifar100_ll_np}
\end{subtable}%
\begin{subtable}{.6\linewidth}\centering
{\scalebox{0.8}{\begin{tabular}{c|cccccccc}
\hline
Target Accuracy (\%) & 71.32 & 70.52 & 70.43 & 69.98 & 69.42 & 24.66 & 22.22 & 5.00 \\
\hline
Adversary Accuracy (\%) & 15.43 & 15.09 & 14.84 & 14.60 & 14.41 & 6.81 & 6.72 & 1.00 \\
\hline
\end{tabular}}}
\caption{ML-ARL}\label{tab:cifar100_ll_ml}
\end{subtable}
\begin{subtable}{\linewidth}\centering
{\begin{tabular}{c|cccccccccc}
\hline
Target Accuracy (\%) & 71.17 & 70.80 & 70.50 & 67.63 & 63.81 & 61.98 & 60.03 & 59.11 & 5.37 & 5.00 \\
\hline
Adversary Accuracy (\%) & 16.88 & 16.60 & 16.43 & 13.23 & 8.38 & 5.02 & 3.80 & 2.81 & 1.23 & 1.00 \\
\hline
\end{tabular}}
\caption{MaxEnt-ARL}\label{tab:cifar100_ll_maxent}
\end{subtable}
\caption{CIFAR-100: Target Accuracy (\%) vs Adversary Accuracy\label{tab:cifar100_ll}}
\end{table*}

\begin{table*}
\begin{subtable}{.3\linewidth}\centering
{\scalebox{0.8}{\begin{tabular}{c|c}
\hline
Target Accuracy (\%) & 71.99 \\
\hline
Adversary Entropy (nats) & 2.09 \\
\hline
\end{tabular}}}
\caption{No Privacy}\label{tab:cifar100_le_np}
\end{subtable}%
\begin{subtable}{.7\linewidth}\centering
{\scalebox{0.8}{\begin{tabular}{c|ccccccc}
\hline
Target Accuracy (\%) & 71.32 & 64.90 & 56.99 & 54.46 & 24.66 & 22.22 & 5.00 \\
\hline
Adversary Entropy (nats) & 2.50 & 2.51 & 2.68 & 2.88 & 3.77 & 3.88 & 4.60 \\
\hline
\end{tabular}}}
\caption{ML-ARL}\label{tab:cifar100_le_ml}
\end{subtable}
\begin{subtable}{\linewidth}\centering
{\begin{tabular}{c|cccccccccccc}
\hline
Target Accuracy (\%) & 71.17 & 71.05 & 70.80 & 67.63 & 67.38 & 65.71 & 63.81 & 61.98 & 59.11 & 56.32 & 5.37 & 5.00 \\
\hline
Adversary Entropy (nats) & 2.27 & 2.28 & 2.31 & 2.91 & 3.01 & 3.24 & 4.14 & 4.56 & 4.57 & 4.57 & 4.59 & 4.60 \\
\hline
\end{tabular}}
\caption{MaxEnt-ARL}\label{tab:cifar100_le_maxent}
\end{subtable}
\caption{CIFAR-100: Target Accuracy vs Adversary Entropy\label{tab:cifar100_le}}
\end{table*}

{\small
\bibliographystyle{ieee}
\bibliography{egbib}
}
\end{document}